\colorlet{edit}{ForestGreen}
\colorlet{comment}{blue}
\DeclareMathOperator*{\argmin}{arg\,min}
\newtheorem{definition}{Definition}
\newtheorem{theorem}{Theorem}
\newtheorem{proposition}{Proposition}
\newtheorem{corollary}{Corollary}
\begin{document}

\title{An Ergodic Measure for Active Learning \\ From Equilibrium}

\author{    Ian Abraham,
            Ahalya Prabhakar
            and Todd D. Murphey
    \thanks{Ian Abraham, Ahalya Prabhakar, and Todd D. Murphey are all with the Department of
    Mechanical Engineering at Northwestern University, Evanston, IL, 60208. corresponding e-mail:
    see i-abr@u.northwestern.edu }
    \thanks{For videos of each example and demo code please visit
    (\url{https://sites.google.com/view/kle3/home})}
}

\markboth{Journal of \LaTeX\ Class Files,~Vol.~14, No.~8, August~2015}%
{Shell \MakeLowercase{\textit{et al.}}: Bare Demo of IEEEtran.cls for IEEE Journals}

\IEEEspecialpapernotice{(Invited Paper)}

\maketitle

\begin{abstract}

    This paper develops KL-Ergodic Exploration from Equilibrium ($\text{KL-E}^3$), a method for
    robotic systems to integrate stability into actively generating informative measurements through
    ergodic exploration. Ergodic exploration enables robotic systems to indirectly
    sample from informative spatial distributions globally, avoiding local optima, and without the
    need to evaluate the derivatives of the distribution against the robot dynamics. Using hybrid
    systems theory, we derive a controller that allows a robot to exploit equilibrium policies
    (i.e., policies that solve a task) while allowing the robot to explore and generate informative
    data using an ergodic measure that can extend to high-dimensional states. We show
    that our method is able to maintain Lyapunov attractiveness with respect to the equilibrium task
    while actively generating data for learning tasks such, as Bayesian optimization, model learning,
    and off-policy reinforcement learning. In each example, we show that our proposed method
    is capable of generating an informative distribution of data while synthesizing smooth control
    signals. We illustrate these examples using simulated systems and provide
    simplification of our method for real-time online learning in robotic systems.

\end{abstract}

Note to Practitioners:
\begin{abstract}

    Robotic systems need to adapt to sensor measurements and learn to exploit an understanding of the world around them
    such that they can truly begin to experiment in the real world. Standard learning methods do not have any
    restrictions on how the robot can explore and learn, making the robot dynamically volatile. Those that do, are often
    too restrictive in terms of the stability of the robot, resulting in a lack of improved learning due to poor data
    collection. Applying our method would allow robotic systems to be able to adapt online without the need for human
    intervention. We show that taking into account both the dynamics of the robot and the statistics of where the robot
    has been, we are able to naturally encode where the robot needs to explore and collect measurements for efficient
    learning that is dynamically safe. With our method we are able to effectively learn while being energetically
    efficient compared to state-of-the-art active learning methods. Our approach accomplishes such tasks in a single
    execution of the robotic system, i.e., the robot does not need human intervention to reset it. Future work will
    consider multi-agent robotic systems that actively learn and explore in a team of collaborative robots.

\end{abstract}

\begin{IEEEkeywords}
    Active Learning, Active Exploration, Online Learning, Stable Learning
\end{IEEEkeywords}

\IEEEpeerreviewmaketitle

\section{Introduction} \label{sec:introduction}

    \IEEEPARstart{R}{obot} learning has proven to be a challenge for real-world application. This is
    partially due to the ineffectiveness of passive data acquisition for learning and the necessity
    for data-driven actions for collecting informative data. What makes this problem even more
    difficult is that generating data for robotic systems is often an unstable process. It
    involves generating measurements dependent upon physical motion of the robot. As a result, safe
    data collection through exploration becomes a challenge. The problem
    becomes exacerbated when memory and constraints (i.e., data must come from a single
    roll out) are imposed on the robot. Thus, robotic systems need to adapt subject to data in a
    systematic and informative manner while preserving a notion of equilibrium as a means of safety
    for itself, the environment, and potentially humans. In this paper, we address these issues by
    developing an algorithm that draws on hybrid systems theory~\cite{axelsson_JOTA_modeinsertion}
    and ergodic exploration~\cite{miller2016ergodic, mavrommati2018real, ayvali2017ergodic,
    abraham2017ergodic,abraham2018data, abraham2018decentralized}. Our approach enables robots to
    generate and collect informative data while guaranteeing Lyapunov
    attractiveness~\cite{polyakov2014stability} with respect to an equilibrium task.\footnote{Often
    such equilibrium tasks can be thought of as stabilization, but can be viewed as running or
    executing a learned skill consistently like swinging up and stabilizing a cart pole.}

    Actively collecting data and learning are often characterized as being part of the same problem of learning
    from experience~\cite{kormushev_robotmotorskills_em_rl, reinhart_AuRo_skill_babble}. This is
    generally seen in the field of reinforcement learning (RL) where attempts at a task, as well as
    learning from the outcome of actions, are used to learn policies and predictive
    models~\cite{kormushev_robotmotorskills_em_rl, mckinnon_multimodal_gp_learning_online}. Much of
    the work in the field is dedicated towards generating a sufficiently large distribution of data
    such that it encompasses unforeseen events, allowing generalization to real-world
    application~\cite{mckinnon_multimodal_gp_learning_online, kormushev_robotmotorskills_em_rl,
    tan_RSS_sim_to_real, marco2017virtual} however inefficient the method. In this work, rather than
    trying to generate a large distribution of data given many attempts at a task, we
    seek to generate an informative data distribution, and learn from the collected
    distribution of data, as two separate problems, where we focus on actively and intelligently
    collecting data in a manner that is safe and efficient while adopting existing methods that
    enable learning.

    Current safe learning methods typically provide some bound on the worst outcome model using
    probabilistic approaches~\cite{berkenkamp2017safe}, but often only consider the safety with
    respect to the task and not with respect to the exploration process. We focus on problems where
    exploring for data intersects with exploring the physical space of robots such that actions that
    are capable of generating the best set of data can destabilize the robot.

    In this work we treat active exploration for data as an ergodic exploration problem, where time
    spent during the trajectory of the robot is proportional to the measure of informative data in that region. As a
    result, we are able to efficiently use the physical motion of the robot by focusing on the dynamic area coverage in
    the search space (as opposed to directly generating samples from the most informative regions). With this approach,
    we are able to integrate known equilibria and feedback policies (we refer to these as equilibrium policies) which
    provide attractiveness guarantees\textemdash that the robot will eventually return to an equilibrium \textemdash
    while providing the control authority that allows the robot to actively seek out and collect informative data in
    order to later solve a learning task. Our contributions are summarized as follows:

    \begin{itemize}
        \item Developing a method which extends ergodic exploration to higher dimensional state-spaces using a sample-based measure.
        \item Synthesis of a control signal that exploits known equilibrium policies.
        \item Present theoretical results on the Lyapunov attractiveness centered around equilibrium policies of our method.
        \item Illustrate our method for improving the sample efficiency and quality of data collected for example learning goals.
    \end{itemize}

    We structure the paper as follows: Section~\ref{sec:related_work} provides a list of related
    work, Section~\ref{sec:eee} defines the problem statement for this work. Section~\ref{sec:kleee}
    introduces our approximation to an ergodic metric and Section~\ref{sec:algo} formulates the
    ergodic exploration algorithm for active learning and exploration from equilibrium.
    Section~\ref{sec:ex} provides examples where our method is applicable. Last,
    Section~\ref{sec:conc} provides concluding remarks on our method and future directions.

\section{Related Work} \label{sec:related_work}

    \textbf{Active Exploration:}
    Existing work generally formulates problems of active exploration
    as information maximization with respect to a known parameterized model~\cite{lin2017direct,
    bourgault2002information}. The problem with this approach is the abundance of local
    optima~\cite{miller2016ergodic, bourgault2002information} which the robotic system needs to
    overcome, resulting in insufficient data collection. Other approaches have sought to solve this
    problem by viewing information maximization as an area coverage problem~\cite{miller2016ergodic,
    ayvali2017ergodic}. Ergodic exploration, in particular, has remedied the issue of local optima
    by using the ergodic metric to minimize the Sobelov distance~\cite{arnold1992sobolev} from the
    time-averaged statistics of the robot's trajectory to the expected information in the explored
    region. This enables both exploration (quickly in low information regions) and exploitation
    (spending significant amount of time in highly informative regions) in order to avoid local
    optima and harvest informative measurements. Our work utilizes this concept of ergodicity to
    improve how robotic systems explore and learn from data.

    \textbf{Ergodicity and Ergodic Exploration:}
    A downside with the current methods for generating ergodic exploration in robots is that they
    assumes that the model of the robot is fully known. Moreover, there is little guarantee that the
    robot will not destabilize during the exploration process. This becomes an issue when the robot
    must explore part of its own state-space (i.e., velocity space) in order to generate informative
    data. Another issue is that these methods do not scale well with the dimensionality of the
    search space, making experimental applications with this approach challenging due to
    computational limitations. Our approach overcomes these issues by using a sample-based
    KL-divergence measure~\cite{ayvali2017ergodic} as a replacement for the ergodic metric. This
    form of measure has been used previously; however, it relied on motion primitives in order to
    compute control actions~\cite{ayvali2017ergodic}. We show that we can generate a continuous
    control signal that minimizes this ergodic measure using hybrid systems theory. The
    same approach is then shown to be readily amenable to existing equilibrium policies. As a
    result, we can use approximate models of dynamical systems instead of complete dynamic
    reconstructions in order to actively generate data while ensuring safety in the exploration
    process through a notion of attractiveness.

    \textbf{Off-Policy Learning:}
    In this work, we utilize concepts from off-policy learning methods~\cite{precup2001off,
    lillicrap2015continuous} which are a set of methods that divides the learning and data
    generation phases. With these methods, data is first generated using some random policy and a
    value function is learned from rewards gathered. The value function is then used to create a
    policy which then updates the value function~\cite{kaelbling1996reinforcement}. Generating more
    data does not require directly using the policy; however, the most common practice is to use the
    learned policy with added noise to guide the policy learning. These methods often rely on
    samples from a buffer of prior data during the training process rather than learning directly
    through the application of the policy. As such, they are more sample-efficient and can
    reuse existing data. A disadvantage with off-policy methods is that they are highly dependent on
    the distribution of data, resulting in an often unstable learning process. Our approach focuses
    on improving the data generation process through ergodic exploration to generate an informed
    distribution of data. As a result, the learning process is able to attain improved results from
    the generated distribution of data and retain its sample-efficiency.

    \textbf{Bayesian Optimization:}
    Our work is most related to the structure of Bayesian optimization~\cite{frazier2018tutorial,
    snoek2012practical, calandra2016bayesian}. In Bayesian optimization, the goal is to find the
    maximum of an objective function which is unknown. At each iteration of Bayesian optimization,
    the unknown objective is sampled and a probabilistic model is generated. An acquisition function
    is then used as a metric for an ``active learner'' to find the next best sample. This loop
    repeats until a maximum is found. In our work, the ``active learner'' is the robot itself which
    must abide by the physics that governs its motion. As a result, the assumption that the active
    learner has the ability to sample anywhere in the search space is lost. Another difference is
    that instead of using a sample-based method to find the subsequent sampling position, as done in
    Bayesian optimization, we use ergodic exploration to generate a set of samples proportional to
    some spatial distribution. We pose the spatial distribution as an acquisition function which we
    show in Section~\ref{sec:bayes_opt}. Thus, the active learner is able to sample from regions which have
    lower probability densities quickly and spending time in regions which are likely to produce an
    optima. Note that it is possible for one to directly use the derivatives of the acquisition
    function and a model of the robot's dynamics to search for the best actions that a robot can
    take to sample from the objective; however, it is likely that similar issues with information
    maximization and local optima will occur.

    \textbf{Information Maximization:}
    Last, we review work that addresses the data-inefficiency problem through information
    maximization~\cite{schwager_robotics_inf_gather}. These methods work by either direct
    maximization of an information measure or by pruning a data-set based on some information
    measure~\cite{nguyen_NEURO_incremental_sparse_gp}. These methods still suffer from the problem
    of local minima due to a lack of exploration or non-convex information
    objectives~\cite{ucinski_CRC_optimal_meas}. Our work uses ergodicity as a way to gauge how much
    a robot should be sampling from the exploration space. As a result, the motion of the robot by
    minimizing an ergodic measure will automatically optimize where and for how long the robot
    should sample from, avoiding the need to prune from a data-set and sample from multiple highly
    informative peaks.

    The following section introduces ergodicity and ergodic exploration and defines the problem
    statement for this work.

\section{Preliminaries: Ergodicity and The Ergodic Metric for Exploration and Exploitation} \label{sec:eee}

    This section serves as a preliminary section that introduces conceptual background
    important to the rest of the paper. We first motivate ergodicity as an approach to the exploration vs. exploitation
    problem and the need for an ergodic measure. Then, we introduce ergodicity and ergodic exploration as the resulting
    outcome of optimizing an ergodic metric.

    The exploration vs. exploitation problem is a problem in robot learning where the robot must
    deal with the choosing to exploit what it already knows or explore for more information, which
    entails running the risk of damaging itself or collecting bad data. Ergodic exploration treats
    the problem of exploration and exploitation as a problem of matching a spatial distributions to
    a time-averaged distribution\textemdash that is, the probability of a positive outcome given a state is
    directly related to the time spent at that state. Thus, more time is spent in regions where
    there are positive outcomes (exploitation) and quickly explores states where there is low
    probability of a positive outcome (exploration).

    \begin{definition} \label{def:erg}
        Ergodicity, in robotics, is defined when a robot whose time-averaged statistics
        over its states is equivalent to the spatial statistics of an arbitrarily defined
        target distribution that intersects those states.
    \end{definition}

    The exact specifications of a spatial statistic varies depending on the underlying task and is
    defined for different tasks in Sections~\ref{sec:ex}. For now, let us define the time-averaged
    statistics of a robot by considering its trajectory $x(t):\mathbb{R} \to \mathbb{R}^n$
    $\forall t \in \left[t_0, t_f\right]$ generated through an arbitrary control process
    $u(t) : \mathbb{R} \to \mathbb{R}^m$.

    \begin{definition}\label{def:fourier_recon}
        Given a search domain $\mathcal{S}^v \subset \mathbb{R}^{n+m}$ where $v \le n + m$, the
        time-averaged statistics (i.e., the time the robot spends in regions of the search domain
        $\mathcal{S}^v$) of a trajectory $x(t)$ is defined as
        \begin{equation}
            c(s \mid x(t)) = \frac{1}{t_f - t_0}\int_{t_0}^{t_f} \delta \left[ s - \bar{x}(t) \right]dt,
        \end{equation}
        where $s \in \mathcal{S}^v$ is a point in the search domain, $\bar{x}(t)$ is the component
        of the robot's state $x(t)$ that intersects the search domain, and $\delta[\cdot]$ is the
        Dirac delta function.
    \end{definition}
    In general, the target spatial statistics are defined through its probability density
    function $p(s)$ where $p(s) > 0$, and $\int_{\mathcal{S}^v} p(s) ds = 1$. Given a target spatial
    distribution $p(s)$, we can calculate an ergodic metric as the distance between the Fourier
    decomposition of $p(s)$ and $c(s \mid x(t))$: \footnote{This distance is known as the
    Sobelov distance.}

    \begin{equation}
        \mathcal{E}(x(t)) = \sum_{k\in\mathbb{N}^v} \Lambda_k \left(c_k - p_k \right)^2
    \end{equation}
    where $\Lambda_k$ is a weight on the harmonics defined in~\cite{mathew2011metrics},
    \begin{align}
        c_k & = \frac{1}{t_f-t_0}\int_{t_0}^{t_f} F_k(x(t)) dt, \nonumber \\
        p_k & = \int_{\mathcal{S}^v} p(s) F_k(s) ds,  \nonumber
    \end{align}
    and $F_k(s)$ is the $k^\text{th}$ Fourier basis function. Minimizing the ergodic metric results in
    the time-averaged statistics of $x(t)$ matching the arbitrarily defined target spatial distribution $p(s)$ as
    well as possible on a finite time horizon and the robotic system sampling measurements in high utility regions specified by
    $p(s)$.

    \begin{figure}[h!]
        \centering
        \includegraphics[width=\linewidth]{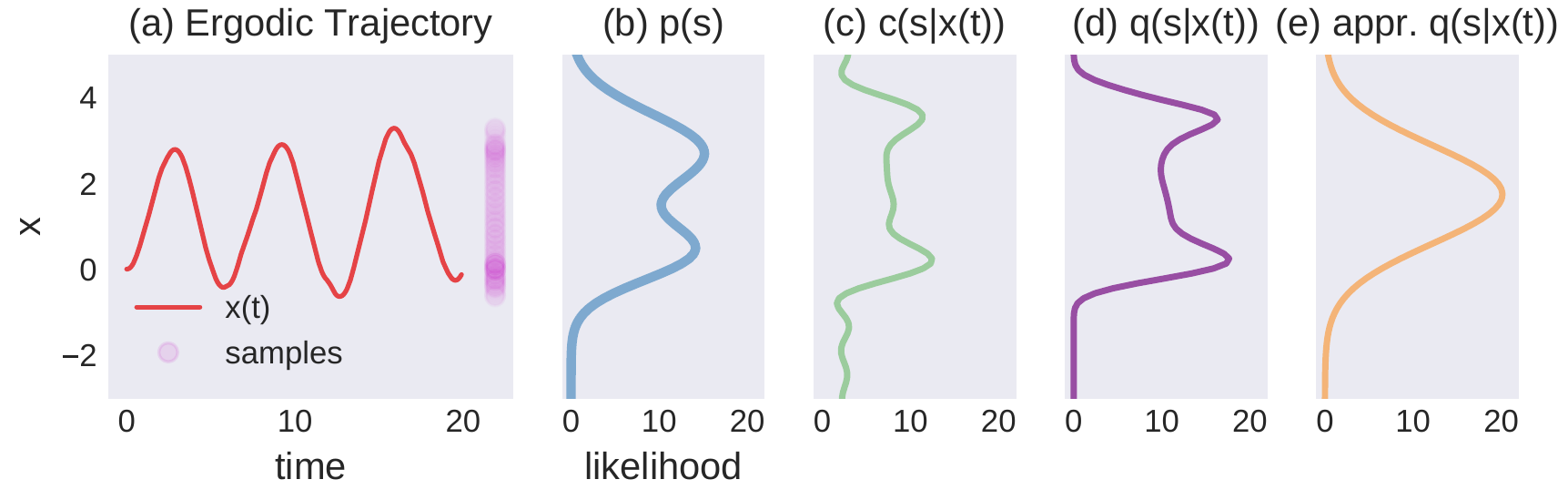}
          \caption{
            (a) Illustration of an ergodic trajectory $x(t)$ with respect to (b) target distribution
            $p(s)$. Time-averaged distribution reconstructions of $x(t)$ are shown using
            Definitions~\ref{def:fourier_recon},~\ref{def:sig_approx}, and
            Eq.~\ref{eq:jensen_sigma}). The Fourier decomposition approach often has residual
            artifacts due to the cosine approximation. $20$ basis functions are used to approximate
            the time-averaged distribution. $\Sigma$-approximation to the time-averaged statistics
            minimizes residual artifacts. (e) Moment matching of the $\Sigma$-approximation provides
            a simplification for computing the time-averaged statistics.
          }
         \label{fig:erg_demo}
    \end{figure}

    As in~\cite{miller2013trajectory, miller2016ergodic}, one can calculate a controller that optimizes the
    ergodic metric such that the trajectory of the robot is ergodic with
    respect to the distribution $p(s)$ (see Fig~\ref{fig:erg_demo} for illustration). However, this
    approach scales $\mathcal{O}(|k|^n)$ where $|k|$ is the maximum integer--valued Fourier term. As
    a result, this method is ill-suited for high-dimensional learning tasks whose exploration states
    are often the full state-space of the robot (often $n > 3$ for most mobile robots). Furthermore,
    the resulting time-averaged distribution reconstruction will often have residual artifacts which
    require additional conditioning to remove. This motivates the following section which defines an
    ergodic measure. \footnote{We make note of the use of the work ``measure'' as
    opposed to metric as we allude to using the KL-divergence which itself is not a metric, but a
    measure.}

\section{KL-Divergence Ergodic Measure} \label{sec:kleee}

    As an alternative to computing the ergodic metric, we present an ergodic measure
    which circumvents the scalability issues mentioned in the previous section. To do this, we
    utilize the Kullback-–Leibler
    divergence~\cite{kullback1951information,kullback1997information,ayvali2017ergodic}
    (KL--divergence) as a measure for ergodicity. Let us first define the approximation to the
    time-averaged statistics of a trajectory $x(t)$:
    \begin{definition}\label{def:sig_approx}
        Given a search domain $\mathcal{S}^v\subset\mathbb{R}^{n+m}$ the $\Sigma$-approximated
        time-averaged statistics of the robot is defined by
        \begin{equation}\label{eq:time-averaged-stats}
            q(s \mid x(t)) = \\ \frac{1}{t_f - t_0} \int_{t_0}^{t_f} \psi(s \mid x(t))dt
        \end{equation}
        where $\psi(s \mid x(t)) = \frac{1}{\eta}\exp \left[
            -\frac{1}{2} \Vert s - \bar{x}(t) \Vert^2_{\Sigma^{-1}}
            \right]$, $\Sigma \in \mathbb{R}^{v \times v}$ is a positive definite matrix parameter that
        specifies the width of the Gaussian, and $\eta$ is a normalization constant.
    \end{definition}
    We call this an approximation because the true time-averaged statistics, as described
    in~\cite{miller2016ergodic} and Definition~\ref{def:fourier_recon}, is a collection of delta
    functions parameterized by time. We approximate the delta function as a Gaussian distribution
    with variance $\Sigma$, converging as $\Vert \Sigma \Vert \to 0$. As an aside, one can treat
    $\Sigma$ as a function of $\bar{x}(t)$ if there is uncertainty in the position of the robot.

    With this approximation, we are able to relax the ergodic objective in~\cite{miller2016ergodic}
    and use the following KL-divergence objective~\cite{ayvali2017ergodic}:
    \begin{align*}
        \footnotesize
        D_\text{KL}(p \Vert q) & = \int_{\mathcal{S}^v} p(s) \log \frac{p(s)}{q(s)} ds \\
        & = \int_{\mathcal{S}^v} p(s)\log p(s) ds -  \int_{\mathcal{S}^v} p(s) \log q(s) ds, \\
        & = -  \int_{\mathcal{S}^v} p(s) \log q(s) ds \\
        & =  - \mathbb{E}_{p(s)} \left[ \log q(s) \right]
    \end{align*}
    where $\mathbb{E}$ is the expectation operator, $q(s) = q(s \mid x(t))$, and $p(s)$ is an
    arbitrary spatial distribution. Note that we drop the first term in the expanded KL-divergence
    because it does not depend on the trajectory of the robot $x(t)$. Rather than computing the
    integral over the exploration space $\mathcal{S}^v$ (partly because of intractability), we
    approximate the expectation operator as
    \begin{align}\label{eq:kl_objective}
        D_\text{KL} (p \Vert q) & =-\mathbb{E}_{p(s)} \left[ \log q(s) \right] \nonumber \\
         & \approx - \sum_{i=1}^N p(s_i) \log q(s_i) \nonumber \\
         & \propto - \sum_{i=1}^N p(s_i) \log\int_{t_0}^{t_f} \exp\left[-\frac{1}{2} \Vert s_i - \bar{x}(t) \Vert^2_{\Sigma^{-1}} \right] dt,
    \end{align}
    where $N$ is the number of samples in the search domain drawn uniformly.\footnote{We can always
    use importance sampling to interchange which distribution we sample from.}  Through this formulation,
    we still obtain the benefits of indirectly sampling from the spatial distribution $p(s)$ without
    having to directly compute derivatives to generate an optimal control signal for the robot.
    Furthermore, this measure prevents computing the measure from scaling drastically with the
    number of exploration states. Figure~\ref{fig:erg_demo} illustrates the resulting reconstruction
    of the time-averaged statistics of a trajectory $x(t)$ using Definition~\ref{def:sig_approx} .
    The following section uses the KL-divergence ergodic measure and derives a controller which
    optimizes~(\ref{eq:kl_objective}) while directly incorporating learned models and policies.

\section{$\text{KL-E}^3$: KL-Ergodic Exploration from Equilibrium} \label{sec:algo}

    In this section, we derive KL-Ergodic Exploration from Equilibrium ($\text{KL-E}^3$), which
    locally optimizes and improves (\ref{eq:kl_objective}). As an additional constraint, we impose
    an equilibrium policy and an approximate transition model of the robot's dynamics on the
    algorithm. By synthesizing $\text{KL-E}^3$ with existing policies that allow the robot to return
    to an equilibrium state (i.e., local linear quadratic regulators (LQR) controller), we can take
    advantage of approximate transition models for planning the robot's motion while
    providing a bound on how unstable the robot can become. We then show how this method is Lyapunov
    attractive~\cite{sontag1999control, khansari2014learning}, allowing the robot to become unstable
    so long as we can ensure the robot will eventually return to an equilibrium.

    \subsection{Model and Policy Assumptions for Equilibrium:}
        We assume that we have a robot whose approximate dynamics can be modeled using the
        continuous time transition model:
        \begin{align}\label{eq:model}
            \dot{x}(t) & = f(x(t),\mu(x(t))) \\ &= g(x(t)) + h(x(t)) \mu(x(t)) \nonumber
        \end{align}
        where $\dot{x}(t) \in \mathbb{R}^n$ is the change of rate of the state $x(t) : \mathbb{R}
        \to \mathbb{R}^n$ of the robot at time $t$, $f(x,u) : \mathbb{R}^{n \times m} \to
        \mathbb{R}^n$ is the (possibly nonlinear) transition model of the robot as a function of
        state $x$ and control $u$ which we partition into $g(x) : \mathbb{R}^n \to \mathbb{R}^n$,
        the free unactuated dynamics, and $h(x) : \mathbb{R}^n \to \mathbb{R}^{n \times m}$, the
        actuated dynamics. In our modeling assumption, we consider a policy $\mu(x) : \mathbb{R}^n
        \to \mathbb{R}^m$ which provides the control signal to the robotic system such that there
        exists a continuous Lyapunov function $V(x)$~\cite{artstein1983stabilization,
        khansari2014learning} which has the following conditions:
        \begin{subequations}\label{eq:lyap_cond}
            \begin{align}
                & V(0) = 0 \\
                \forall x \in \mathcal{B} \backslash \{ 0 \} \quad & V(x) > 0 \\
                \forall x \in \mathcal{B} \backslash \{ 0 \} \quad & \nabla V \cdot f(x, \mu(x)) < 0
            \end{align}
        \end{subequations}
        where $\mathcal{B} \subset \mathbb{R}^n$ is a compact and connected set, and
        \begin{equation}
            \dot{V}(x(t)) = \frac{\partial}{\partial x} V(x) \cdot f(x, u) = \nabla V \cdot f(x, u).
        \end{equation}
        Thus, a trajectory $x(t)$ with initial condition at time $t=t_0$ subject to (\ref{eq:model})
        and $\mu(x)$ is defined as
        \begin{equation}\label{eq:trajectory}
            x(t) = x(t_0) + \int_{t_0}^{t} f(x(t), \mu(x(t))) dt.
        \end{equation}
        For the rest of the paper, we will refer to $\mu(x)$ as an equilibrium policy which is tied
        to an objective which returns the robot to an equilibrium state. In our prior
        work~\cite{abraham2019active} we show how one can include any objective into the synthesis
        of $\text{KL-E}^3$.

    \subsection{Synthesizing a Schedule of Exploratory Actions:}
        Given the assumptions of a known approximate model and an equilibrium policy, our goal is to
        generate a control signal that augments $\mu(x)$ and minimizes (\ref{eq:kl_objective})
        while ensuring $x$ remains within the compact set $\mathcal{B}$ which will allow the
        robot to return to an equilibrium state within the time $t \in \left[t_0, t_f \right]$.

        Our approach starts by quantifying how sensitive (\ref{eq:kl_objective}) is to switching
        from the policy $\mu(x(t))$ to an arbitrary control vector $\mu_\star(t)$ at any time $\tau
        \in \left[ t_0 , t_f \right]$ for an infinitesimally small duration of time $\lambda$. We
        will later use this sensitivity to calculate a closed-form solution to the most influential
        control signal $\mu_\star(t)$.
        \begin{proposition}
            The sensitivity of (\ref{eq:kl_objective}) with respect to the duration time $\lambda$,
            of switching from the policy $\mu(x)$ to an arbitrary control signal $\mu_\star(t)$ at
            time $\tau$ is
            \begin{equation}\label{eq:mode_insertion}
                \frac{\partial D_\text{KL}}{\partial \lambda} = \rho(\tau)^\top (f_2 - f_1)
            \end{equation}
            where $f_2 = f(x(\tau), \mu_\star(\tau))$ and $f_1 = f(x(\tau), \mu(x(\tau))$, $\rho(t)
            \in \mathbb{R}^n$ is the adjoint, or co-state variable which is the solution of the
            following differential equation
            \begin{equation}\label{eq:adjoint_diff}
                \dot{\rho}(t) =
                    \sum_i \frac{p(s_i)}{q(s_i)} \frac{\partial \psi}{\partial x}
                     -
                     \left(
                    \frac{\partial f}{\partial x} + \frac{\partial f}{\partial u} \frac{\partial \mu}{\partial x}
                 \right)^\top \rho(t)
            \end{equation}
            subject to the terminal constraint $\rho(t_f) = \mathbf{0}$, and $\frac{\partial \psi}{\partial x}$ is
            evaluated at each sample $s_i$.
        \end{proposition}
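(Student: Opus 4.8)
The plan is to treat this as a mode-insertion (needle-variation) gradient computation from hybrid systems theory~\cite{axelsson_JOTA_modeinsertion}, with the one non-standard twist that the objective~(\ref{eq:kl_objective}) is not an additive running cost but a nonlinear functional of the full-horizon time-averaged statistics $q(s_i \mid x(t))$. I would proceed in three movements: (i) characterize how the state trajectory responds to an infinitesimal insertion of $\mu_\star$, (ii) push that response through the cost by the chain rule, and (iii) collapse the resulting forward-propagated integral into a single boundary term at $\tau$ by introducing the adjoint $\rho$.

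First I would set up the state sensitivity. Inserting $\mu_\star(\tau)$ in place of $\mu(x(\tau))$ over a window of length $\lambda$ about $\tau$ perturbs the trajectory; writing $z(t) = \partial x(t)/\partial \lambda$ evaluated at $\lambda = 0$, the standard first-order needle argument gives $z(t)=0$ for $t < \tau$, the jump condition $z(\tau) = f_2 - f_1$ at the insertion time, and forward evolution $\dot z(t) = A(t)\, z(t)$ for $t > \tau$, where $A(t) = \frac{\partial f}{\partial x} + \frac{\partial f}{\partial u}\frac{\partial \mu}{\partial x}$ is the closed-loop Jacobian that arises because the system reverts to the equilibrium policy $\mu(x)$ after the insertion, per~(\ref{eq:model}).

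Next I would differentiate the cost. Since $D_\text{KL} = -\sum_i p(s_i)\log q_i$ with $q_i = q(s_i \mid x(t))$, the chain rule gives $\partial D_\text{KL}/\partial \lambda = -\sum_i \frac{p(s_i)}{q_i}\,\partial q_i/\partial \lambda$, and differentiating under the integral defining $q_i$ yields $\partial q_i/\partial \lambda \propto \int_\tau^{t_f} \frac{\partial \psi}{\partial x}(s_i\mid x(t))\, z(t)\, dt$, the lower limit being $\tau$ because $z$ vanishes earlier. The weights $q_i$ are evaluated along the nominal trajectory and are therefore fixed constants at first order, so this is a well-posed linear functional of $z$. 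To eliminate the forward-propagated $z(t)$ I would examine $\frac{d}{dt}\big(\rho^\top z\big) = \dot\rho^\top z + \rho^\top A z$ and choose $\dot\rho = \sum_i \frac{p(s_i)}{q_i}\big(\frac{\partial \psi}{\partial x}\big)^\top - A^\top \rho$ so that the $Az$ terms cancel and $\frac{d}{dt}\big(\rho^\top z\big) = \sum_i \frac{p(s_i)}{q_i}\frac{\partial \psi}{\partial x} z$. Integrating from $\tau$ to $t_f$ and imposing $\rho(t_f)=\mathbf{0}$ collapses the integral to $-\rho(\tau)^\top z(\tau)$; combining with the chain-rule expression and substituting $z(\tau) = f_2 - f_1$ then produces $\partial D_\text{KL}/\partial \lambda = \rho(\tau)^\top (f_2 - f_1)$, with $\rho$ satisfying the stated adjoint~(\ref{eq:adjoint_diff}) (the leading $1/(t_f-t_0)$ being absorbed by the proportionality in~(\ref{eq:kl_objective})).

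The main obstacle I anticipate is the non-additive structure of the objective: because the logarithm sits outside the time integral, the sensitivity is not a pointwise running-cost gradient but is globally reweighted by the statistics $q_i$. Justifying that these weights may be frozen at their nominal values to first order (so the adjoint forcing term $\sum_i \frac{p(s_i)}{q_i}\frac{\partial \psi}{\partial x}$ is well defined), together with rigorously validating the differentiation-under-the-integral step and the $\lambda \to 0^+$ limit that produces the jump $z(\tau)=f_2-f_1$, are the steps requiring the most care; the subsequent adjoint and integration-by-parts bookkeeping is then routine.
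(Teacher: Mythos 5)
Your proof is correct and reaches the paper's result, but the final third of your argument takes a genuinely different route from the paper's. The shared skeleton is the needle variation at $\tau$ (with $z(t) = \partial x(t)/\partial\lambda$ vanishing before $\tau$, jumping to $f_2 - f_1$, and evolving under the closed-loop Jacobian $A = \frac{\partial f}{\partial x} + \frac{\partial f}{\partial u}\frac{\partial \mu}{\partial x}$) followed by the chain rule through the non-additive cost, which produces the $\frac{p(s_i)}{q(s_i)}$-reweighted linear functional of $z$; this matches the paper's Eqs.~(\ref{eq:kl_sensitivity1})--(\ref{eq:xdlambda}). Where you diverge is in how the adjoint appears: the paper solves the variational equation explicitly with a state-transition matrix, $\frac{\partial x(t)}{\partial \lambda} = \Phi(t,\tau)(f_2-f_1)$, \emph{defines} $\rho(\tau)^\top$ as the closed-form integral (\ref{eq:adjoint_integral}), and then recovers the differential equation (\ref{eq:adjoint_diff}) by differentiating that integral with respect to $\tau$ using $\Phi(\tau,\tau)=1$ and $\frac{\partial}{\partial\tau}\Phi(t,\tau) = -\Phi(t,\tau)A$. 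You instead posit the adjoint ODE directly and collapse the sensitivity with the duality identity $\frac{d}{dt}\left(\rho^\top z\right) = \dot{\rho}^\top z + \rho^\top A z$, integrating from $\tau$ to $t_f$ and imposing $\rho(t_f)=\mathbf{0}$; your sign bookkeeping checks out and yields exactly $\rho(\tau)^\top(f_2-f_1)$. Your route is more economical\textemdash it never needs the transition matrix, its derivative in the second argument, or the transpose manipulations in the paper's presentation\textemdash while the paper's construction buys an explicit integral representation of $\rho(\tau)$ as a by-product rather than introducing the adjoint equation as an ansatz to be verified. One small remark: the point you flag as the main obstacle, freezing the weights $q_i$ at their nominal values, requires no separate justification; the factor $1/q_i$ arises automatically from differentiating $\log q_i$ and is evaluated at $\lambda = 0$, which is precisely what the paper does implicitly.
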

        \begin{proof}
            See Appendix~\ref{app:proofs}
        \end{proof}

        The sensitivity $\frac{\partial}{\partial \lambda} D_\text{KL}$ is known as the mode
        insertion gradient~\cite{axelsson_JOTA_modeinsertion}. Note that the second term in
        (\ref{eq:adjoint_diff}) encodes how the dynamics will change subject to the policy $\mu(x)$.
        We can directly compute the mode insertion gradient for any control $\mu_\star(t)$ that we
        choose. However, our goal is to find a schedule of $\mu_\star(t)$ which minimizes
        (\ref{eq:kl_objective}) while still bounded by the equilibrium policy $\mu(x)$. We solve for
        this augmented control signal by formulating the following optimization problem:
        \begin{equation}\label{eq:aux_obj}
            \mu_\star(t) = \argmin_{\mu(t) \forall t \in [t_0, t_f]} \int_{t_0}^{t_f} \frac{\partial}{\partial \lambda} D_\text{KL} \Big |_{\tau=t} +\frac{1}{2} \Vert \mu_\star(t) - \mu(x(t)) \Vert_\mathbf{R}^2 dt
        \end{equation}
        where $\mathbf{R}\in \mathbb{R}^{m \times m}$ is a positive definite matrix that penalizes
        the deviation from the policy $\mu(x)$ and $\frac{\partial}{\partial \lambda}
        D_\text{KL}|_{\tau=t}$ is (\ref{eq:mode_insertion}) evaluated at time $t$.
        \begin{proposition}
            The augmented control signal $\mu_\star(t)$ that minimizes (\ref{eq:aux_obj}) is given by
            \begin{equation} \label{eq:explr_actions}
                \mu_\star(t) = - \mathbf{R}^{-1} h(x(t))^\top \rho(t) + \mu(x(t)).
            \end{equation}
        \end{proposition}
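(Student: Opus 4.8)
The plan is to exploit the control-affine structure of the dynamics in (\ref{eq:model}) so that the mode insertion gradient becomes \emph{linear} in the decision variable $\mu_\star$. Once this is done, the objective in (\ref{eq:aux_obj}) reduces to an unconstrained convex quadratic that decouples across time, and the minimizer can be read off from a single pointwise first-order condition.

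First I would substitute $f(x,u) = g(x) + h(x)u$ into the difference $f_2 - f_1$ appearing in (\ref{eq:mode_insertion}). Because the unactuated drift $g(x)$ is independent of the control input, it cancels, leaving $f_2 - f_1 = h(x(t))\left(\mu_\star(t) - \mu(x(t))\right)$, so the first term of the integrand in (\ref{eq:aux_obj}) becomes $\rho(t)^\top h(x(t))\left(\mu_\star(t) - \mu(x(t))\right)$. Next I would observe that, along the nominal trajectory generated by the equilibrium policy $\mu(x)$, both $x(t)$ and the adjoint $\rho(t)$ solving (\ref{eq:adjoint_diff}) are fixed functions of time that do not depend on $\mu_\star$. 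The integrand therefore depends on $\mu_\star$ only through its pointwise value $\mu_\star(t)$, with no coupling through time derivatives or through other time instants. This is the crux of the argument: it lets me replace the variational problem over the entire schedule with an independent minimization at each $t$ of the convex quadratic
\[
    \ell(\mu_\star) = \rho^\top h\left(\mu_\star - \mu\right) + \tfrac{1}{2}\left(\mu_\star - \mu\right)^\top \mathbf{R}\left(\mu_\star - \mu\right),
\]
where I suppress the time argument for readability.

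Finally I would set the gradient $\partial \ell / \partial \mu_\star = h^\top \rho + \mathbf{R}\left(\mu_\star - \mu\right)$ to zero and solve, yielding $\mu_\star = -\mathbf{R}^{-1} h^\top \rho + \mu$, which is exactly (\ref{eq:explr_actions}) after reinstating the time arguments. Since $\mathbf{R}$ is positive definite, the Hessian of $\ell$ equals $\mathbf{R} \succ 0$, so this stationary point is the unique global minimizer at each $t$, and the resulting pointwise-optimal schedule is therefore globally optimal for (\ref{eq:aux_obj}). I do not anticipate a serious analytic obstacle here; the only steps needing genuine care are justifying the pointwise decoupling (i.e., that $x(t)$ and $\rho(t)$ are treated as given) and confirming via positive definiteness of $\mathbf{R}$ that the stationary point is a minimizer rather than a saddle.
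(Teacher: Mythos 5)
Your proof is correct and follows essentially the same route as the paper: both exploit the control-affine structure $f(x,u) = g(x) + h(x)u$ so that the mode insertion gradient is linear in $\mu_\star$, reduce (\ref{eq:aux_obj}) to a pointwise convex quadratic in $\mu_\star(t)$, and solve the first-order condition $h^\top \rho + \mathbf{R}(\mu_\star - \mu) = 0$ at each instant. Your additional remarks---that $g(x)$ cancels in $f_2 - f_1$, that $x(t)$ and $\rho(t)$ are fixed along the nominal trajectory so the problem decouples across time, and that $\mathbf{R} \succ 0$ makes the stationary point the unique global minimizer---are exactly the justifications the paper leaves implicit in its convexity remark.
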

        \begin{proof}
            Taking the derivative of (\ref{eq:aux_obj}) with respect to $\mu_\star(t)$ at each
            instance in time $t \in [t_0 , t_f]$ gives
            \begin{align}\label{eq:j2dmu}
                \int_{t_0}^{t_f}\frac{\partial}{\partial \mu_\star} & \left( \frac{\partial}{\partial \lambda} D_\text{KL} \Big |_{\tau=t} +\frac{1}{2} \Vert \mu_\star(t) - \mu(x(t)) \Vert_\mathbf{R}^2 \right) dt \\
                & =  \int_{t_0}^{t_f}  h(x(t))^\top \rho(t) + \mathbf{R} (\mu_\star (t) - \mu(x(t))) dt \nonumber
            \end{align}
            where we expand $f(x,u)$ using (\ref{eq:model}). Since the expression under the integral
            in (\ref{eq:aux_obj}) is convex in $\mu_\star(t)$ and is at an optimizer when
            (\ref{eq:j2dmu}) is equal to $\mathbf{0} \forall t\in [t_0, t_f]$, we set the expression
            in (\ref{eq:j2dmu}) to zero and solve for $\mu_\star(t)$ at each instant in time giving
            us
            \begin{equation*}
                \mu_\star(t) = - \mathbf{R}^{-1}h(x(t))^\top \rho(t) + \mu(x(t))
            \end{equation*}
            which is the schedule of exploratory actions that reduces the objective for time $t \in \left[t_0 , t_f \right]$
            and is bounded by $\mu(x)$.
        \end{proof}
        In practice, the first term in (\ref{eq:explr_actions}) is calculated and applied to the
        robot using a true measurement of the state $\hat{x}(t)$ for the policy $\mu(x)$. We refer
        to this first term as $\delta \mu_\star(t) = - \mathbf{R}^{-1}h(x(t))^\top \rho(t)$ yielding
        $\mu_\star(t) = \delta \mu_\star(t) + \mu(\hat{x}(t))$.

        Given the derivation of the augmented control signal that can generate ergodic exploratory
        motions, we verify the following through theoretical analysis in the next section:
        \begin{itemize}
            \item that (\ref{eq:explr_actions}) does in fact reduce (\ref{eq:kl_objective})
            \item that (\ref{eq:explr_actions}) imposes a bound on the conditions in (\ref{eq:lyap_cond})
            \item and that a robotic system subject to (\ref{eq:explr_actions}) has a notion of Lyapunov attractiveness
        \end{itemize}

    \textbf{Theoretical Analysis:}
        We first illustrate that our approach for computing (\ref{eq:explr_actions}) does reduce
        (\ref{eq:kl_objective}).
        \begin{corollary}
            Let us assume that $\frac{\partial}{\partial \mu}\mathcal{H}\neq 0$ $\forall t \in \left[ t_0, t_f \right]$,
            where $\mathcal{H}$ is the control Hamiltonian.
            Then
            \begin{equation}
                \frac{\partial}{\partial \lambda} D_\text{KL} = -\Vert h(x(t))^\top \rho(t) \Vert_{\mathbf{R}^{-1}}^2< 0
            \end{equation}
            $\forall t \in \left[t_0, t_f\right]$ subject to $\mu_\star(t)$.
        \end{corollary}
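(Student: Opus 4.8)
The plan is to substitute the closed-form optimal control from (\ref{eq:explr_actions}) directly into the mode insertion gradient (\ref{eq:mode_insertion}), exploiting the control-affine structure of the dynamics in (\ref{eq:model}). First I would compute the difference $f_2 - f_1$. Since $f_2 = f(x(t), \mu_\star(t))$ and $f_1 = f(x(t), \mu(x(t)))$ share the same state, they share the same unactuated drift $g(x(t))$, which cancels, leaving only the actuated contribution $f_2 - f_1 = h(x(t))\left(\mu_\star(t) - \mu(x(t))\right)$.

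Next I would insert the optimal control. From (\ref{eq:explr_actions}) the deviation from the equilibrium policy is exactly $\mu_\star(t) - \mu(x(t)) = -\mathbf{R}^{-1} h(x(t))^\top \rho(t)$, so $f_2 - f_1 = -h(x(t)) \mathbf{R}^{-1} h(x(t))^\top \rho(t)$. Substituting into (\ref{eq:mode_insertion}) gives
\begin{equation*}
    \frac{\partial}{\partial \lambda} D_\text{KL} = \rho(t)^\top (f_2 - f_1) = -\rho(t)^\top h(x(t)) \mathbf{R}^{-1} h(x(t))^\top \rho(t),
\end{equation*}
which is precisely $-\Vert h(x(t))^\top \rho(t) \Vert_{\mathbf{R}^{-1}}^2$ by the definition of the weighted quadratic norm, establishing the claimed identity.

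The remaining work is the strict inequality. Because $\mathbf{R}$ is positive definite, so is $\mathbf{R}^{-1}$; hence the quadratic form is nonpositive and vanishes only if $h(x(t))^\top \rho(t) = \mathbf{0}$. This is where I would invoke the hypothesis: since the KL objective (\ref{eq:kl_objective}) has no direct dependence on the control, the partial of the control Hamiltonian with respect to $\mu$ reduces to the dynamics term alone, $\frac{\partial}{\partial \mu}\mathcal{H} = h(x(t))^\top \rho(t)$. The assumption $\frac{\partial}{\partial \mu}\mathcal{H} \neq 0$ is therefore exactly the statement that $h(x(t))^\top \rho(t) \neq \mathbf{0}$ for all $t \in [t_0, t_f]$, forcing the quadratic form to be strictly positive and the mode insertion gradient strictly negative.

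The main obstacle I anticipate is not the algebra, which is essentially a one-line substitution, but articulating cleanly the link between the Hamiltonian hypothesis and the nonvanishing of $h(x(t))^\top \rho(t)$. One must state explicitly that the objective is state-dependent only, so that $\frac{\partial}{\partial \mu}\mathcal{H}$ carries no running-cost contribution and collapses to $h(x(t))^\top \rho(t)$; without this observation the strict inequality would demand a separate argument excluding the degenerate case in which the adjoint $\rho(t)$ lies in the kernel of $h(x(t))^\top$.
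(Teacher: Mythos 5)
Your proposal is correct and follows essentially the same route as the paper: substitute (\ref{eq:explr_actions}) into (\ref{eq:mode_insertion}), cancel the drift $g(x)$ via the control-affine structure, and recognize the result as the negative weighted quadratic form $-\Vert h(x(t))^\top \rho(t) \Vert_{\mathbf{R}^{-1}}^2$. In fact your treatment is slightly more complete than the paper's own proof, which stops at ``$\le 0$'' and never explicitly uses the hypothesis $\frac{\partial}{\partial \mu}\mathcal{H} \neq 0$; your observation that the running cost is control-independent, so that $\frac{\partial}{\partial \mu}\mathcal{H} = h(x(t))^\top \rho(t)$ and the hypothesis exactly rules out the degenerate kernel case, is precisely the missing step that upgrades the inequality to the strict ``$< 0$'' claimed in the statement.
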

        \begin{proof}
            Inserting (\ref{eq:explr_actions}) into (\ref{eq:mode_insertion}) and dropping the dependency of time for clarity gives
            \begin{align}\label{eq:neg-djdlam}
                \frac{\partial}{\partial \lambda} D_\text{KL} &= \rho(t) ^\top \left( f_2 - f_1\right) \nonumber \\
                & = \rho^\top\left(
                g(x) + h(x) \mu_\star - g(x) - h(x) \mu(x)
                \right) \nonumber \\
                & = \rho^\top( -h(x)\mathbf{R}^{-1} h(x)^\top \rho + h(x)\mu(x) - h(x)\mu(x)) \nonumber \\
                & = - \rho^\top h(x) \mathbf{R}^{-1} h(x)^\top \rho \nonumber \\
                &  = - \Vert h(x(t))^\top \rho(t) \Vert_{\mathbf{R}^{-1}}^2 \le 0.
            \end{align}
            Thus, $\frac{\partial}{\partial \lambda} D_\text{KL}$ is always negative subject to (\ref{eq:explr_actions}).
        \end{proof}
        For $\lambda>0$ we can approximate the reduction in $D_\text{KL}$ as $\Delta D_\text{KL}
        \approx \frac{\partial}{\partial \lambda} D_\text{KL} \lambda \le 0$. Thus, by
        applying (\ref{eq:explr_actions}), we are generating exploratory motions that minimize the
        ergodic measure defined by (\ref{eq:kl_objective}).

        Our next set of analysis involves searching for a bound on the conditions in
        (\ref{eq:lyap_cond}) when (\ref{eq:explr_actions}) is applied at any time $\tau \in \left[0,
        t-\lambda \right]$ for a duration $\lambda \le t$.
        \begin{theorem}\label{tmh1}
            Given the conditions in (\ref{eq:lyap_cond}) for a policy $\mu(x)$,
            then $V(x^\tau_\lambda(t)) - V(x(t)) \le \lambda \beta$,
            where $x^\tau_\lambda(t)$ is the solution to (\ref{eq:switched_traj}) subject to (\ref{eq:explr_actions})
            for $\tau \in \left[ 0, t-\lambda \right]$, $\lambda \le t$, and
            \begin{equation}
                \beta = \sup_{t \in \left[ \tau, \tau + \lambda \right]}
            - \nabla V \cdot h(x(t)) \mathbf{R}^{-1}h(x(t))^\top \rho(t).
            \end{equation}
        \end{theorem}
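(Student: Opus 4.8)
The plan is to treat $V$ evaluated at the terminal time $t$ as a cost along the switched trajectory $x^\tau_\lambda$ and to bound its increase relative to the nominal trajectory $x(t)$, which follows the equilibrium policy $\mu(x)$ throughout. First I would exploit the structure of the switched trajectory in (\ref{eq:switched_traj}): since the exploratory control is applied only on $[\tau,\tau+\lambda]$, the switched and nominal trajectories share the same initial condition and coincide on $[0,\tau]$, so $V(x^\tau_\lambda(\tau)) = V(x(\tau))$ and the entire discrepancy accumulates on $[\tau, t]$. Writing the difference through the fundamental theorem of calculus,
\begin{equation}
V(x^\tau_\lambda(t)) - V(x(t)) = \int_\tau^t \frac{d}{ds}\left[ V(x^\tau_\lambda(s)) - V(x(s)) \right] ds,
\end{equation}
I would then split the integral at $\tau+\lambda$ into a perturbation window and a post-switch window and bound each separately.

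On the window $[\tau,\tau+\lambda]$ the switched trajectory obeys $\dot{x}^\tau_\lambda = f(x^\tau_\lambda,\mu_\star)$ while the nominal obeys $\dot{x} = f(x,\mu(x))$. Because the two states agree at $s=\tau$ and the window has length $\lambda$, the state deviation is $O(\lambda)$, so to first order in $\lambda$ the integrand reduces to $\nabla V \cdot (f_2 - f_1)$ evaluated along the nominal trajectory, with the remaining terms contributing $O(\lambda^2)$. Using the control-affine model (\ref{eq:model}) together with the optimal exploratory control (\ref{eq:explr_actions}), the control difference is $\mu_\star - \mu(x) = -\mathbf{R}^{-1}h(x)^\top\rho$, exactly as in the derivation of (\ref{eq:neg-djdlam}), so that $f_2 - f_1 = -h(x)\mathbf{R}^{-1}h(x)^\top\rho$ and the integrand becomes $-\nabla V \cdot h(x)\mathbf{R}^{-1}h(x)^\top\rho$. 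Bounding this by its supremum over the window and multiplying by the window length $\lambda$ produces precisely the $\lambda\beta$ term.

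The post-switch window $[\tau+\lambda, t]$ is where I expect the main obstacle to lie. On this interval both trajectories are driven by the equilibrium policy $\mu$, but from states that now differ by $O(\lambda)$, so a priori this interval could inject an additional first-order increase into $V(x^\tau_\lambda(t)) - V(x(t))$. The key tool here is the Lyapunov condition (\ref{eq:lyap_cond}): as long as $x^\tau_\lambda$ stays inside the compact, connected set $\mathcal{B}$, we have $\nabla V \cdot f(x^\tau_\lambda,\mu(x^\tau_\lambda)) < 0$, so $V$ is strictly decreasing along the switched trajectory once the perturbation ends. I would use this strict decrease to argue that the largest gap in $V$ is already captured at the end of the perturbation window and that the post-switch contribution cannot be positive, leaving the net bound $V(x^\tau_\lambda(t)) - V(x(t)) \le \lambda\beta$. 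Making this final step fully rigorous, rather than merely first order, requires controlling how the $O(\lambda)$ divergence at $\tau+\lambda$ propagates under $\mu$; I expect to lean on the strict negativity of $\dot V$ under $\mu$ and on the invariance of $\mathcal{B}$ to close the argument, treating $\lambda$ as small so that the perturbation-window term dominates.
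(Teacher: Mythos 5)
Your treatment of the switch window $[\tau,\tau+\lambda]$ is essentially the paper's argument: the paper writes $V(x^\tau_\lambda(t))$ in integral form (\ref{eq:lyap_switch}), applies the pointwise identity $\nabla V \cdot f(x,\mu_\star) = \nabla V \cdot f(x,\mu(x)) - \nabla V \cdot h(x)\mathbf{R}^{-1}h(x)^\top\rho$ (\ref{eq:lyap_chain}), which is exactly your $f_2 - f_1 = -h\mathbf{R}^{-1}h^\top\rho$ step, and then bounds the extra term by its supremum times $\lambda$ to get $\lambda\beta$. Where you diverge from the paper is the post-switch interval $[\tau+\lambda,t]$, and this is where your proposal has a genuine gap.

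The paper carries no residual post-switch term at all: in (\ref{eq:lyap_cool_form}) it recombines every $\nabla V \cdot f(\cdot,\mu(\cdot))$ integral into $V(x(t))$ with $x(t)$ given by (\ref{eq:trajectory}), which silently identifies the switched trajectory's states with the nominal ones after the switch \textemdash the standard first-order (needle-variation) reading inherited from the mode-insertion framework. You, by contrast, correctly isolate this interval as the crux, but your proposed resolution does not work: strict decrease of $V$ along the switched trajectory does not imply that the gap $V(x^\tau_\lambda(s)) - V(x(s))$ stops growing, because
\begin{equation*}
\frac{d}{ds}\left[ V(x^\tau_\lambda(s)) - V(x(s)) \right] = \nabla V \cdot f(x^\tau_\lambda(s), \mu(x^\tau_\lambda(s))) - \nabla V \cdot f(x(s), \mu(x(s)))
\end{equation*}
is a difference of two \emph{negative} quantities, which is positive whenever the nominal state happens to lie in a region of faster Lyapunov decrease than the perturbed state. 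So ``both trajectories decrease under $\mu$'' does not yield ``post-switch contribution is non-positive.'' Making that interval rigorous requires an incremental property of the closed loop under $\mu$ (contraction, or an incremental Lyapunov function), or a Gronwall estimate on the $O(\lambda)$ state deviation \textemdash and the latter introduces a factor growing with $t-\tau-\lambda$, which destroys the clean $\lambda\beta$ bound. If instead you adopt the paper's convention of working to first order in $\lambda$ and identifying the post-switch integrands along the two trajectories, the post-switch term vanishes by construction and your window calculation alone finishes the proof; you should state that convention explicitly rather than appeal to the sign of $\dot{V}$.
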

        \begin{proof}
            See Appendix~\ref{app:proofs}.
        \end{proof}

        We can choose any time $\tau \in \left[0, t-\lambda\right]$ to apply $\mu_\star(t)$ and
        provide an upper bound quantifying the change of the Lyaponov function described in
        (\ref{eq:lyap_cond}) by fixing the maximum value of $\lambda$ during active exploration. In
        addition, we can tune $\mu_\star(t)$ using the regularization value $\mathbf{R}$ such that
        as $\Vert \mathbf{R} \Vert \to \infty$, $\beta \to 0$ and $\mu_\star(t) \to \mu(x(t))$.

        Given this bound, we can guarantee Lyapunov attractiveness~\cite{polyakov2014stability}, where there exists a
        time $t$ such that the system (\ref{eq:trajectory}) is guaranteed to return to a region of attraction (from
        which the system can be guided towards a stable equilibrium state $x_0$).
        \begin{definition}
            A robotic system defined by (\ref{eq:model}) is Lyapunov attractive if at some time $t$,
            the trajectory of the system $x(t) \in \mathcal{C}(t) \subset \mathcal{B}$ where
            $\mathcal{C}(t) = \{ x(t) \vert V(x) \le \beta^\star, \nabla V \cdot f(x(t), \mu(x(t)))
            <0\}$, $\beta^\star > 0$ is the maximum level set of $V(x)$ where $\nabla V \cdot f(x,
            \mu(x)) < 0$, and $\lim_{t\to\infty}  x(t)  \to x_0$ such that $x_0$ is an equilibrium
            state.
        \end{definition}
        \begin{theorem}\label{thm2}
            Given the schedule of exploratory actions (\ref{eq:explr_actions})  $\forall t \in
            \left[\tau, \tau + \lambda \right]$, a robotic system governed by (\ref{eq:model}) is
            Lyapunov attractive such that $\lim_{t\to\infty} x^\tau_\lambda(t) \to x_0$.
        \end{theorem}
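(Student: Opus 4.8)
The plan is to stitch together the finite, bounded growth of the Lyapunov function during the exploration window (Theorem~\ref{tmh1}) with the strict decrease guaranteed by the equilibrium policy $\mu(x)$ outside that window, and then invoke an invariance argument to drive the state to $x_0$. The three ingredients I would use are: the bound $V(x^\tau_\lambda(t)) - V(x(t)) \le \lambda\beta$ from Theorem~\ref{tmh1}, the strict negativity $\nabla V \cdot f(x,\mu(x)) < 0$ on $\mathcal{B}\backslash\{0\}$ from (\ref{eq:lyap_cond}), and the tunability of $\lambda$ and $\mathbf{R}$ noted just after Theorem~\ref{tmh1}.

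First I would split the time axis into the pre-exploration interval $[t_0,\tau)$, the exploration window $[\tau,\tau+\lambda]$, and the post-exploration interval $(\tau+\lambda,\infty)$. On $[t_0,\tau)$ the system runs $\mu(x)$, so $\dot V < 0$ and the trajectory both decreases $V$ and remains in $\mathcal{B}$. On $[\tau,\tau+\lambda]$ the switched control $\mu_\star(t)$ of (\ref{eq:explr_actions}) is active, and Theorem~\ref{tmh1} caps the growth of $V$ at $\lambda\beta$; crucially, because $\beta \to 0$ as $\Vert\mathbf{R}\Vert\to\infty$ and $\lambda$ is freely chosen, I can pick $\lambda$ and $\mathbf{R}$ so that $V(x^\tau_\lambda(\tau+\lambda)) \le \beta^\star$, i.e. the state is left inside the maximal sublevel set $\mathcal{C}(t)$ on which $\mu(x)$ still yields $\nabla V \cdot f(x,\mu(x)) < 0$.

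Next, on $(\tau+\lambda,\infty)$ the control reverts to $\mu(x)$, so $\dot V = \nabla V \cdot f(x,\mu(x)) < 0$ again, making $V(x^\tau_\lambda(t))$ strictly decreasing and bounded below by $0$; hence it converges. To upgrade convergence of $V$ to convergence of the state, I would use compactness of the sublevel sets of $V$ inside $\mathcal{B}$ together with a LaSalle/Barbalat-type argument: since $\dot V$ is strictly negative away from the origin, the only point at which the trajectory can accumulate is the equilibrium $x_0$ with $V(x_0)=0$, giving $\lim_{t\to\infty} x^\tau_\lambda(t) \to x_0$. This also exhibits the finite time $\tau+\lambda$ at which $x^\tau_\lambda(t) \in \mathcal{C}(t)$, matching the definition of Lyapunov attractiveness.

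The hard part will be the second step, namely certifying that the bounded excursion of size $\lambda\beta$ never ejects the trajectory from $\mathcal{C}(t)$ (equivalently from $\mathcal{B}$) where the equilibrium policy is dissipative. This is exactly where the freedom in $\lambda$ and $\mathbf{R}$ must be exploited quantitatively: I need $V(x(\tau)) + \lambda\beta \le \beta^\star$, and $\beta$ itself depends on $\rho$, hence implicitly on the whole trajectory, so the estimate is not fully decoupled. A secondary subtlety is that strict negativity of $\dot V$ alone only yields monotone decrease, so making the asymptotic claim rigorous requires the invariance/Barbalat argument above rather than a one-line Lyapunov citation.
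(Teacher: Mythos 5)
Your proposal is correct and rests on the same two ingredients as the paper's proof\textemdash the excursion bound $V(x^\tau_\lambda(t)) - V(x(t)) \le \lambda\beta$ from Theorem~\ref{tmh1} (via the identity (\ref{eq:lyap_cool_form})) and the dissipativity $\nabla V \cdot f(x,\mu(x)) < 0$ of (\ref{eq:lyap_cond}), together with the tunability of $\lambda$ and $\mathbf{R}$\textemdash but it concludes by a genuinely different argument. The paper does not split time into three segments or invoke an invariance principle; it writes the single inequality $V(x^\tau_\lambda(t)) \le V(x(0)) - \gamma t + \beta\lambda < \beta^\star$, where $-\gamma = \sup_{s\in[0,t]} \nabla V \cdot f(x(s),\mu(x(s))) < 0$ is treated as a uniform decay rate, and then argues that $t$ can be chosen with $\gamma t \gg \beta\lambda$, concluding $V(x^\tau_\lambda(t)) \to V(x_0)$ and hence $x^\tau_\lambda(t) \to x_0$. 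What the paper's route buys is a quantitative statement: an explicit time scale $t \sim \beta\lambda/\gamma$ after which the exploratory excursion is paid off, which is what supports the claim that $\lambda$ and $\mathbf{R}$ can always be chosen so attractiveness holds. What your route buys is rigor exactly where the paper is heuristic: as you correctly flag, strict negativity of $\dot V$ alone yields no uniform rate\textemdash the paper's $\gamma$ degenerates toward $0$ as the trajectory nears equilibrium, and the linear-in-$t$ bound cannot hold literally for all large $t$ since $V \ge 0$\textemdash so your monotone-convergence-plus-LaSalle/Barbalat step, using compactness of the sublevel set $\{V \le \beta^\star\} \cap \mathcal{B}$, is the standard rigorous way to upgrade decrease of $V$ to convergence of the state. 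Finally, the ``hard part'' you isolate\textemdash certifying $V(x(\tau)) + \lambda\beta \le \beta^\star$ when $\beta$ depends on $\rho$ and hence on the trajectory\textemdash is not resolved by the paper either; it is absorbed into the same combined inequality and the qualitative remark that $\beta \to 0$ as $\Vert\mathbf{R}\Vert \to \infty$, so this concern is shared with, not a gap relative to, the published proof.
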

        \begin{proof}
            Using Theorem~\ref{tmh1}, the integral form of the Lyapunov function
            (\ref{eq:lyap_switch}), and the identity (\ref{eq:lyap_chain}), we can write
            \begin{align*}
                V(x^\tau_\lambda(t)) & = V(x(0))
                        + \int_{0}^{t} \nabla V \cdot f(x(s), \mu(x(s)) ) ds \nonumber \\
                        & \quad \quad - \int_{\tau}^{\tau + \lambda} \nabla V \cdot h(x(s)) \mathbf{R}^{-1} h(x(s))^\top \rho(s) ds
            \end{align*}
            \begin{equation*}
                \le V(x(0)) - \gamma t + \beta \lambda < \beta^\star,
            \end{equation*}
            where
            \begin{equation}
                -\gamma = \sup_{s \in \left[0, t \right]} \nabla V \cdot f(x(s), \mu(x(s))) < 0.
            \end{equation}
            Since $\lambda$ is fixed and $\beta$ can be tuned by the matrix weight $\mathbf{R}$, we can choose a $t$ such that
            $\gamma t \gg \beta \lambda$.
            Thus, $\lim_{t\to \infty} V(x^\tau_\lambda(t)) \to V(x_0)$ and $\lim_{t\to\infty}  x^\tau_\lambda(t)  \to x_0$, implies
            Lyapunov attractiveness,  where $V(x_0)$ is the minimum of the Lyapunov function at the equilibrium state $x_0$.

        \end{proof}
        Proving Lyapunov attractiveness allows us to make the claim that a robot will return to a
        region where $\dot{V}(x)<0$ subject to the policy $\mu(x)$. This enables the robot to
        actively explore states which would not naturally have a safe recovery. Moreover, this
        analysis shows that we can choose the value of $\lambda$ and $\mathbf{R}$ when calculating
        $\mu_\star(t)$ such that attractiveness always holds, giving us an algorithm that is safe for
        active learning.

        So far, we have shown that (\ref{eq:explr_actions}) is a method that generates approximate
        ergodic exploration from equilibrium policies. We prove that this approach does reduce
        (\ref{eq:kl_objective}) and show the ability to quantify and bound how much the active
        exploration process will deviate the robotic system from equilibrium. Last, it is shown that
        generating data for learning does not require constant guaranteed Lyapunov stability of the
        robotic system, but instead introduce the notion of attractiveness where we allow the robot
        to explore the physical realm so long as the time to explore is finite and the magnitude of
        the exploratory actions is restrained. In the following section, we extend our previous work
        in~\cite{abraham2019active} by further approximating the time averaged-statistics so that
        computing the adjoint variable can be done more efficiently.

    \subsection{$\text{KL-E}^3$ for Efficient Planning and Exploration}

        We extend our initial work by providing a further approximation to computing the
        time-averaged statistics which improves the computation time of our implementation. Taking
        note of (\ref{eq:kl_objective}), we can see that we have to evaluate $q(s_i)$ at each sample
        point $s_i$ where $q(s) = q(s \vert x(t))$ has to evaluate the stored trajectory at each
        time. In real robot experiments, often the underlying spatial statistics $p(s)$ change
        significantly over time. In addition, most robot experiments require replanning which
        results in lost information over repeated iterations. Thus, rather than trying to compute
        the whole time averaged trajectory in Definition~\ref{def:sig_approx}, we opt to approximate
        the distribution by applying Jensen's inequality:
        \begin{align}\label{eq:jensen_sigma}
            q(s \vert x(t)) & \propto \int_{t_0}^{t_f} \exp\left[ -\frac{1}{2} \Vert s - \bar{x}(t)
            \Vert_{\Sigma^{-1}}^2\right]dt \nonumber \\
            & \ge \exp\left( -\frac{1}{2} \int_{t_0}^{t_f} \Vert s -
            \bar{x}(t) \Vert_{\Sigma^{-1}}^2 dt \right).
        \end{align}
        Using this expression in (\ref{eq:kl_objective}), we can write
        \begin{align}
            D_\text{KL} &\propto -\int_{\mathcal{S}^v} p(s)\log q(s) ds \nonumber \\
                        &= -\int_{\mathcal{S}^v} p(s)\log \left(\exp\left( -\frac{1}{2} \int_{t_0}^{t_f} \Vert s - \bar{x}(t) \Vert_{\Sigma^{-1}}^2 dt \right) \right)ds \nonumber \\
                        &\propto \int_{\mathcal{S}^v} p(s) \left(\int_{t_0}^{t_f}\Vert s - \bar{x}(t) \Vert_{\Sigma^{-1}}^2 dt \right) ds \nonumber \\
                        &\approx \sum_{i}^N p(s_i)\int_{t_0}^{t_f}\Vert s_i - \bar{x}(t) \Vert_{\Sigma^{-1}}^2 dt.
        \end{align}
        Following the results to compute (\ref{eq:mode_insertion}), we can show
        that (\ref{eq:explr_actions}) remains the same where the only modification is in the adjoint
        differential equation where
        \begin{equation}
            \dot{\rho}(t) = -\sum_i p(s_i) \frac{\partial \ell}{ \partial x} - \left(\frac{\partial f}{\partial x} + \frac{\partial f}{\partial u} \frac{\partial \mu}{\partial x}\right)^\top \rho(t)
        \end{equation}
        such that $\ell = \ell(s, x) = \Vert s - x \Vert_{\Sigma^{-1}}^2$. This formulation has no need to
        compute $q(s)$ and instead only evaluate $p(s)$ at sampled points. We reserve using this
        implementation for robotic systems that are of high dimensional space or when calculating the
        derivatives can be costly due to over-parameterization (i.e., multi-layer networks). Note
        that all the theoretical analysis still holds because the fundamental theory relies on the
        construction through hybrid systems theory rather than the KL-divergence itself. The
        downside to this approach is that one now loses the ability to generate consistent ergodic
        exploratory movements. The effect can be seen in Figure~\ref{fig:erg_demo}(e) where the
        trajectory is approximated by a wide Gaussian\textemdash rather than the bi-model
        distribution found in Fig.~\ref{fig:erg_demo}(d). However, for non-stationary $p(s)$, having
        exact ergodic behavior is not necessary and such approximations at the time-averaged
        distribution level are sufficient.

        In the following subsection, we provide base algorithm and implementation details for
        $\text{KL-E}^3$ and present variations based on the learning goals in~\ref{sec:ex} in the
        Appendix.

    \subsection{Algorithm Implementation}

        In this section, we provide an outline of a base implementation of $\text{KL-E}^3$ in
        Algorithm~\ref{alg:KLE3}. We also define some variables which were not previously mentioned in
        the derivation of $\text{KL-E}^3$ and provide further implementation detail.

        There exist many ways one could use (\ref{eq:explr_actions}). For instance, it is possible to
        simulate a dynamical system for some time horizon $t_H$ and apply (\ref{eq:explr_actions}) in a
        trajectory optimization setting. Another way is to repeatedly generate trajectory plans at each
        instance and apply the first action in a model-based predictive control (MPC) manner. We found
        that the choice of $\tau$ and $\lambda$ can determine how one will apply
        (\ref{eq:explr_actions}). That is, given some time $t_i$, if $\tau=t_i$ and $\lambda=t_H$, we
        recover the trajectory optimization formulation whereas when $\tau=t_i$ and $\lambda=dt$, where $dt$ is
        the time step, then the MPC formulation is recovered.\footnote{One can also automate choosing
        $\tau$ and $\lambda$ using a line search~\cite{more1994line}.} Rather than focusing on incremental
        variations, we focus on the general structure of the underlying algorithm when combined with a
        learning task.

        We first assume that we have an approximate transition model $f(x,u)$ and an equilibrium policy
        $\mu(x)$. A simulation time horizon $t_H$ and a time step $dt$ is specified where the true robot
        measurements of state are given by $\hat{x}(t)$ and the simulated states are $x(t)$. Last, a
        spatial distribution $p(s)$ is initialized (usually uniform to start), and an empty data set
        $\mathcal{D} = \{\hat{x}(t_j), y(t_j) \}_j$ is initialized where $y(t)$ are measurements.

        Constructing $p(s)$ will vary depending on the learning task itself. The only criteria that is
        necessary for $p(s)$ is that it depends on the measurement data that is collected and used in the learning task.
        Furthermore, $p(s)$ should represent a utility function that indicates where informative measurements are in the
        search space to improve the learning task. In the following sections, we provide examples for constructing and
        updating $p(s)$ given various learning tasks.

        Provided the initial items, the algorithm first samples the robot's state $\hat{x}(t_i)$ at the
        current time $t_i$. Using the transition model and policy, the next states $x(t)
        \forall t \in [t_i, t_i+t_H]$ are simulated. A set of $N$, samples $(s_1, s_2, \ldots, s_N)$ are
        generated and used to compute $p(s), q(s)$. The adjoint variable is then backwards simulated
        from $t = t_i+t_H \to t_i$ and is used to compute $\delta\mu_\star(t)$. We ensure robot safety
        by applying $\delta \mu_\star(t) + \mu(\hat{x}(t))$ with real measurements of the robot's state.
        Data is then collected and appended to $\mathcal{D}$ and used to update $p(s)$. Any additional
        steps are specified by the learning task. The pseudo-code for this description is provided
        in Algorithm~\ref{alg:KLE3} .

        \begin{algorithm}[!h]
            \caption{$\text{KL-E}^3$ Base Algorithm}
            \begin{algorithmic}[1]
                \State \textbf{init:} approximate transition model $f(x,u)$,
                        initial true state $\hat{x}(0)$, equilibrium policy $\mu(x)$,
                        spatial distribution $p(s)$, simulation time horizon $t_H$, time step $dt$.
                        data set $\mathcal{D}$, $i=0$
                \While{task not done}
                    \State set $x(t_i) = \hat{x}(t_i)$
                    \State $\triangleright$ simulation loop
                    \For{$\tau_i \in \left[t_i, \ldots, t_i + t_H \right]$}
                        \State \qquad $\triangleright$ forward predict states using any
                        \State \qquad $\triangleright$ integration method (Euler shown)
                        \State $x(\tau_{i+1}) = x(\tau_i) + f(x(\tau_i), \mu(x(\tau_i))) dt$
                    \EndFor
                    \State \qquad $\triangleright$ backwards integrate choosing $\dot{\rho}(t)$
                    \State \qquad $\triangleright$ set the terminal condition
                    \State generate $N$ samples of $s_i$ uniformly within $\mathcal{S}^v$
                    \State $\rho(t_i + t_H) = \mathbf{0}$
                    \For{$\tau_i \in \left[t_H + t_i, \ldots, t_i \right]$}
                        \State $\rho(\tau_{i-1}) = \rho(\tau_i) - \dot{\rho}(\tau_i) dt$
                        \State \qquad $\triangleright$ since $x(t)$ is simulated, we return
                        \State \qquad $\triangleright$ just the first term of ($\ref{eq:explr_actions}$)
                        \State \qquad $\triangleright$ and calculate $\mu(x)$ online
                        \State $\delta \mu_\star(\tau_{i-1}) = -\mathbf{R}^{-1} h(x(\tau_{i-1}))^\top \rho(\tau_{i-1})$
                    \EndFor
                    \State \qquad $\triangleright$ apply to real robot
                    \State chose $\tau \in [t_i, t_i+t_H]$ and $\lambda \le t_H$ or use line search~\cite{more1994line}
                    \For{$t \in [t_i, t_{i+1}] $}
                        \If{ $t \in [\tau, \tau + \lambda]$}
                            \State apply $\mu_\star(t) = \delta\mu_\star(t) + \mu(\hat{x}(t))$
                        \Else
                            \State apply $\mu(x(t))$
                        \EndIf
                        \If{time to sample}
                            \State measure true state $\hat{x}(t)$ and measurements $y(t)$
                            \State append to data set $\mathcal{D} \gets \{ \hat{x}(t), y(t)\}$
                        \EndIf
                    \EndFor
                    \State update $p(s)$ given $\mathcal{D}$ \Comment task specific
                    \State update $f(x,u), \mu(x)$ \Comment if needed
                    \State update learning task
                    \State $i \gets i + 1$
                \EndWhile
            \end{algorithmic}
            \label{alg:KLE3}
        \end{algorithm}

\section{Example Learning Goals} \label{sec:ex}

        \begin{figure*}[th!]
            \centering
            \begin{subfigure}{0.22\textwidth}
                \includegraphics[width=\linewidth]{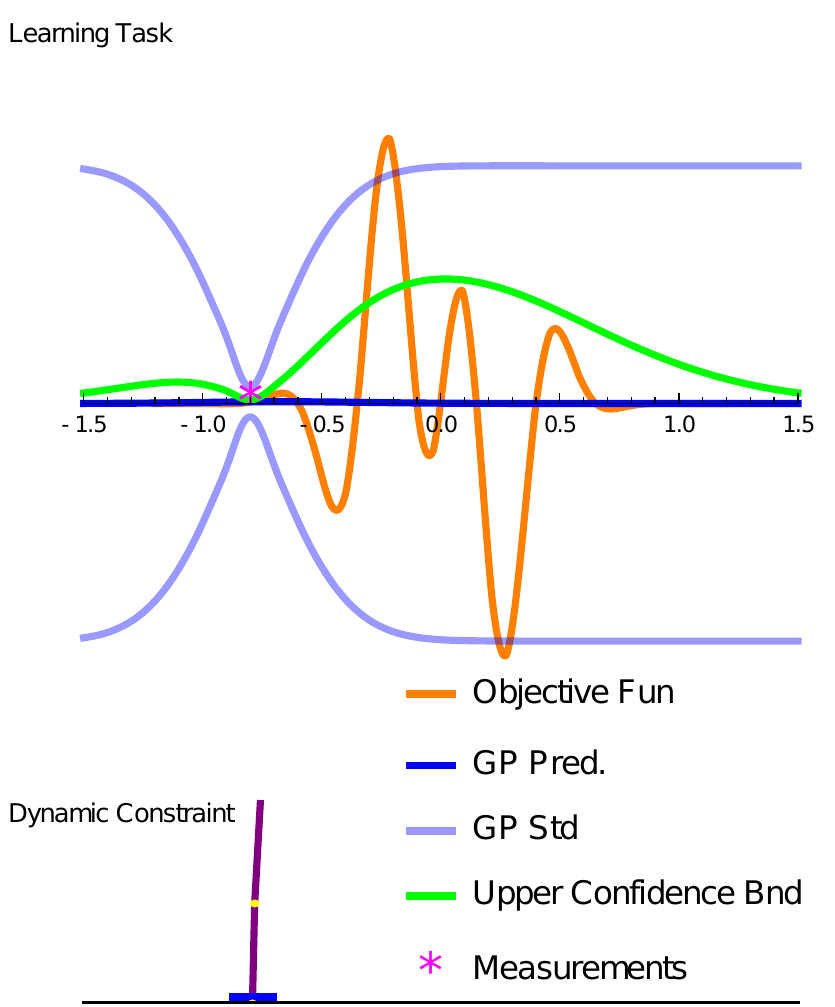}
                \caption{$t = 0$}
            \end{subfigure}
            \begin{subfigure}{0.24\textwidth}
                \includegraphics[width=\linewidth]{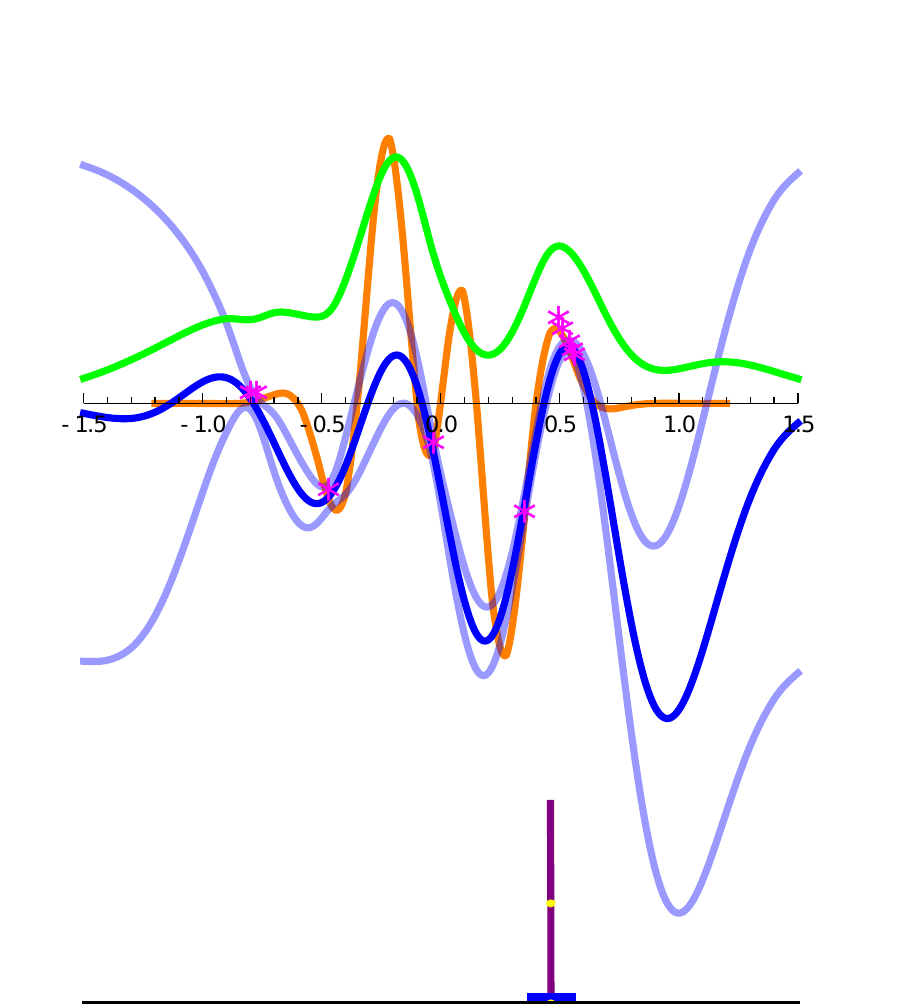}
                \caption{$t = 10$}
            \end{subfigure}
            \begin{subfigure}{0.24\textwidth}
                \includegraphics[width=\linewidth]{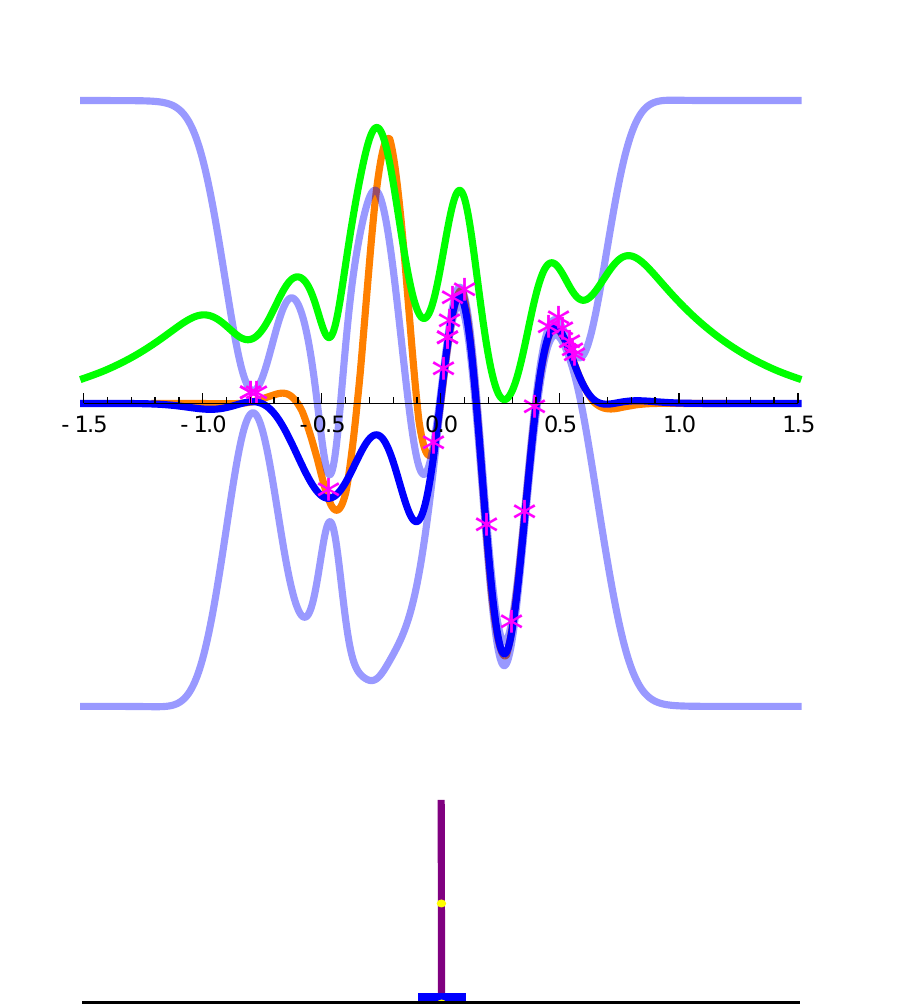}
                \caption{$t = 20$}
            \end{subfigure}
            \begin{subfigure}{0.24\textwidth}
                \includegraphics[width=\linewidth]{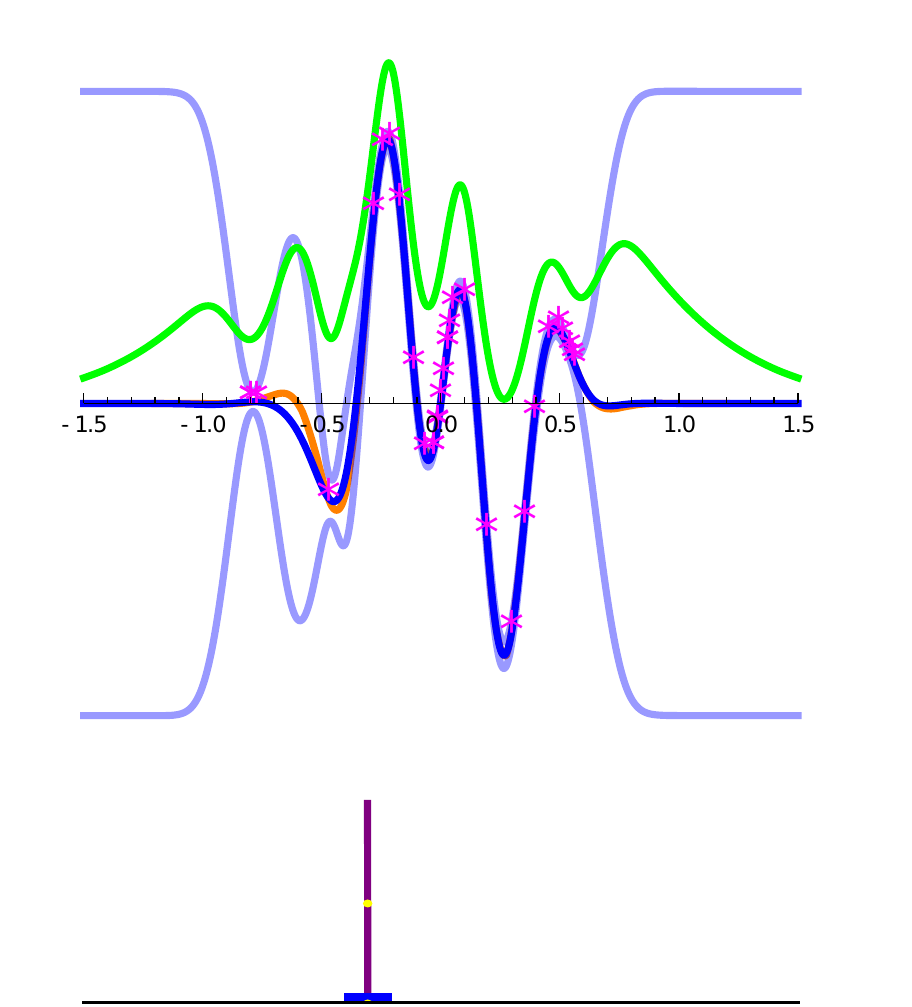}
                \caption{$t = 30$}
            \end{subfigure}
            \caption{
                    Time series snap-shots of cart double pendulum actively sampling and estimating
                    the objective function (orange). The uncertainty (light blue) calculated from the
                    collected data set drives the exploratory motion of the cart double pendulum
                    while our method ensures that the cart double pendulum is maintained in its
                    upright equilibrium state.
            }
            \label{fig:bayes_opt_time_series}
        \end{figure*}

    In this section, our goal is to use $\text{KL-E}^3$ for improving example methods for
    learning. In particular, we seek to show that our method can improve Bayesian optimization,
    transition model learning (also known as dynamics model learning or system identification), and
    off-policy robot skill learning. In each subsection, we provide an overview of the learning
    goal and define the spatial distribution $p(s)$ used in our method. In addition,
    we show the following:
    \begin{itemize}
        \item that our method is capable of improving the learning process through exploration
        \item that our method does not violate equilibrium policies and destabilize the robot
        \item and that our method efficiently explores through exploiting the dynamics of a robot
            and the underlying spatial distribution.
    \end{itemize}
    For each example, we provide implementation, including parameters used, in the appendix.

    \subsection{Bayesian Optimization}\label{sec:bayes_opt}

        In our first example, we explore $\text{KL-E}^3$ for Bayesian optimization using a cart
        double pendulum system~\cite{zhong2001energy} that needs to maintain itself at the upright
        equilibrium. Bayesian optimization is a probabilistic approach for optimizing
        objective functions $\phi(x) : \mathbb{R}^n \to \mathbb{R}$ that are either expensive to
        evaluate or are highly nonlinear. A probabilistic model (often a Gaussian process) of the
        objective function is built from sampled data $x_k\in \mathbb{R}^n$ and the posterior of the
        model is used to construct an acquisition function~\cite{snoek2012practical}. The
        acquisition function maps the sample space $x$ to a value which indicates the utility of the
        sample (in other words, how likely is the sample to provide information about the objective
        given the previous sample in $x$). The acquisition function is often simpler and easier to
        calculate for selecting where to sample next rather than the objective function itself.
        Assuming one can freely sample the space $x$, Bayesian optimization takes a sample based on
        the acquisition function and a posterior is computed. The process then repeats until some
        terminal number of evaluations of the objective or the optimizer is reached. We provide
        pseudo-code for Bayesian optimization in the Appendix, Alg~\ref{alg:bayes_opt}.

        In many examples of Bayesian optimization, the assumption is that the learning algorithm can
        freely sample anywhere in the sample space $x\in\mathbb{R}^n$; however, this is not always
        true. Consider an example where a robot must collect a sample from a Bayesian optimization
        step where the search space of this sample intersects the state-space of the robot itself.
        The robot is constrained by its dynamics in terms of how it can sample the objective.
        Thus, the Bayesian optimization step becomes a constrained optimization problem where the goal
        is to reach the optimal value of the acquisition function subject to the dynamic constraints
        of the robot. Furthermore, assume that the motion of the robot is restricted to maintain the
        robot at an equilibrium (such as maintaining the inverted equilibrium of the cart double
        pendulum). The problem statement is then to enable a robot to execute a sample step of
        Bayesian optimization by taking into account the constraints of the robot. We use this
        example to emphasize the effectiveness of our method for exploiting the local dynamic
        information using a cart double pendulum where the equilibrium state is at the
        upright inverted state and a policy maintains the double pendulum upright.

        Here, we use a Gaussian process with the radial basis function (RBF) to build a model of the
        objective function shown in Fig.~\ref{fig:bayes_opt_time_series}. Using Gaussian process predictive
        posterior mean $\bar{\mu}(x)$ and variance $\sigma(x)$, the upper confidence bound
        (UCB)~\cite{snoek2012practical} acquisition
        function is defined as
        \begin{equation}
            \text{UCB}(x) = \bar{\mu}(x) + \kappa \sigma(x)
        \end{equation}
        where $\kappa >0$. We augment Alg~\ref{alg:KLE3} for Bayesian optimization by setting the
        UCB acquisition function as the target distribution which we define through the Boltzmann
        softmax function, a common method of
        converting functions that indicate regions of high-value into distributions~\cite{bishop2006pattern, sutton2018reinforcement}:
        \footnote{Other distribution forms are possible, but the analysis of their effects is left for future work
        and we choose the Boltzmann softmax formulation for consistency throughout each example.}
        \begin{equation}\label{eq:ucb_target}
            p(s) = \frac{\exp( c \text{UCB}(s) )}{\int_{\mathcal{S}^v}\exp( c \text{UCB}(\bar{s}) ) d \bar{s}}
        \end{equation}
        where $c>0$ is a scaling constant. Note that the denominator is approximated as a sum over
        the samples that our method generates. An approximate linear model of the cart double
        pendulum dynamics centered around the unstable equilibrium is used along with an LQR policy
        that maintains the double pendulum upright. We provide brief pseudo-code of the base
        Algorithm~\ref{alg:KLE3} in the Appendix Alg.~\ref{alg:KLE3_bayes_opt}.

        \begin{figure}
            \centering
            \includegraphics[width=0.9\linewidth]{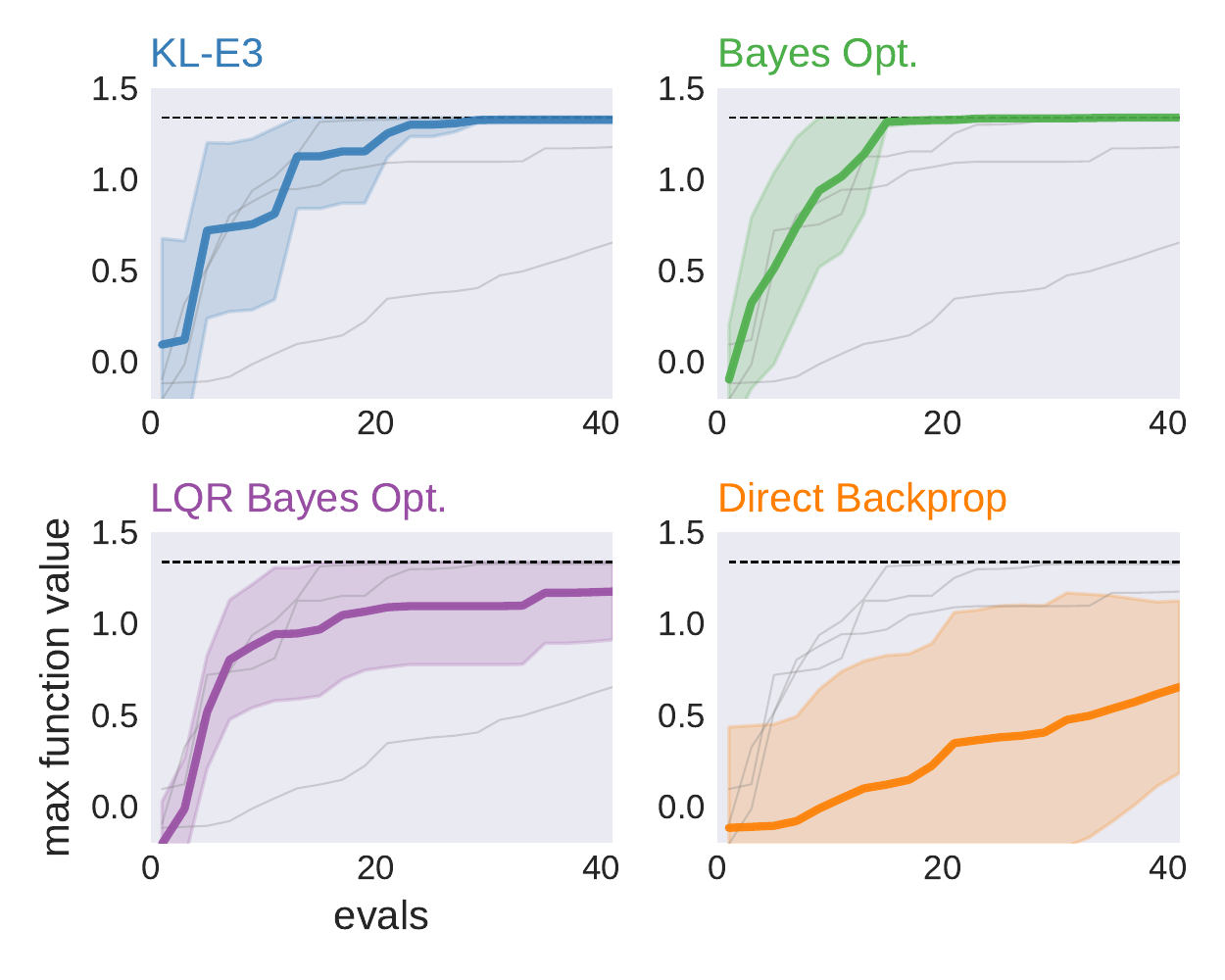}
            \caption{
                Comparison of $\text{KL-E}^3$ against Bayesian Optimization without dynamic
                constraint, LQR-Bayesian optimization, and direct maximization of the acquisition
                function through gradient propagation of the cart double pendulum approximate
                dynamics in determining the maximum value of the objective function through
                exploration. Our method is able to perform as well as Bayesian optimization directly
                sampling the exploration space while performing better than the naive LQR-Bayesian
                optimization. Dashed black line indicates the maximum value of the function.
            }
            \label{fig:bayes_opt_comparison}
        \end{figure}

        \begin{figure}[h!]
            \centering
            \includegraphics[width=0.8\linewidth]{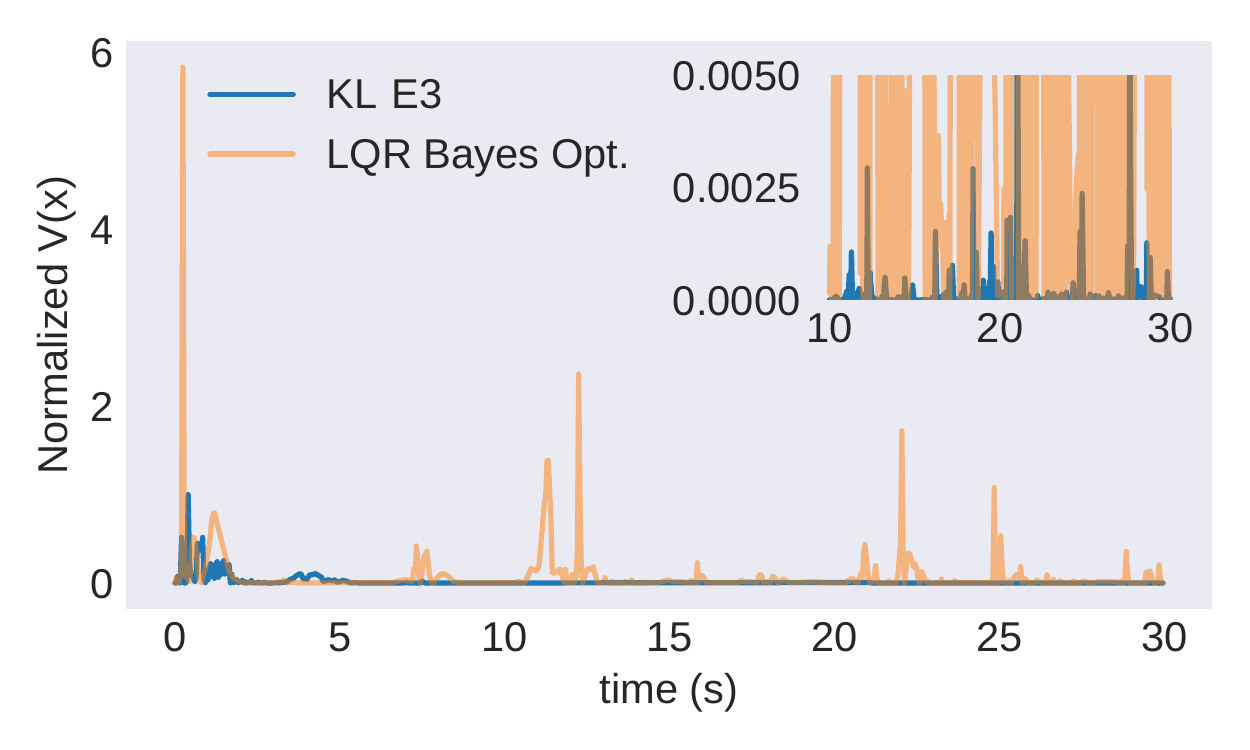}
            \caption{
                Normalized Lyapunov function for the cart double pendulum with upright equilibrium.
                Our method (shown in blue) is roughly 6 times more stable than LQR-Bayesian optimization. The large
                initial values indicate the sweeping shown in
                Fig.~\ref{fig:bayes_opt_time_series}(a) when the cart double pendulum moves across
                the search space. Subsequent application of the exploratory motion refine the
                exploration process. The Lyapunov attractiveness property is enforced through
                automatic switching of the exploration process.
            }
            \label{fig:lyap}
        \end{figure}

        We first illustrate that our method generates ergodic exploration through an execution of
        our method for Bayesian optimization in Figure~\ref{fig:bayes_opt_time_series}. Here, the
        time-series evolution of $\text{KL-E}^3$ is shown to sample proportional to the acquisition
        function. As a result, our method generates samples near each of the peaks of the objective
        function. Furthermore, we can see that our method is exploiting the dynamics as well as the
        equilibrium policy, maintaining Lyapunov attractiveness with respect to the inverted
        equilibrium (we will later discuss numerical results in Fig~\ref{fig:lyap}).

        Next, our method is compared against three variants of Bayesian optimization: the first is
        Bayesian optimization with no dynamics constraint (i.e., no robot is used); second, a linear
        quadratic regulator (LQR) variation of Bayesian optimization where the maximum of the
        acquisition function is used as a target for an LQR controller; and last a direct
        maximization of the acquisition using the stabilizing equilibrium policy
        (see~\cite{abraham2019activelearning} for detail) is used. A total of 10 trials for each
        method are collected with the agent starting at the same location uniformly sampled between
        $-0.8$ and $0.8$ throughout the sample space. In Fig.~\ref{fig:bayes_opt_comparison} we show
        that our method not only performs comparably to Bayesian optimization without dynamic
        constraints\footnote{This may change if the dynamics of the robot are slower or the
        exploration space is sufficiently larger. Note that the other methods would also be equally
        affected.}, but outperforms both LQR and direct maximization variants of Bayesian
        optimization. Because LQR-Bayes method does not take into account dynamic coverage, and
        instead focuses on reaching the next sample point, the dynamics of the robot often do not
        have sufficient time to stabilize which leads to higher variance of the learning objective.
        We can see this in Fig.~\ref{fig:lyap} where we plot a Lyapunov function for the cart double
        pendulum~\cite{zhong2001energy} at the upright unstable equilibrium. Specifically, our
        method is roughly 6 times more stable at the start of the exploration compared to the LQR
        variant of Bayesian optimization. Lyapunov attractiveness is further illustrated in
        Fig.~\ref{fig:lyap} as time progresses and each exploratory motion is closer to the
        equilibrium. Last, directly optimizing the highly nonlinear acquisition function often leads
        to local optima, yielding poor performance in the learning goal. This can be seen with the
        performance of directly optimizing UCB using the cart double pendulum approximate
        dynamics in Fig.~\ref{fig:bayes_opt_comparison}) where the cart double pendulum would often
        find and settle at a local optima.

        In this example, the cart double pendulum only needed to explore the cart position domain to
        find the maximum of the objective function. The following example illustrates a more dynamic
        learning goal where the robot needs to generate a stochastic model of its own dynamics through
        exploration within the state-space.

    \subsection{Stochastic Transition Model Learning}

        In this next example $\text{KL-E}^3$ is used to collect data for learning a stochastic
        transition model of a quadcopter~\cite{fan2017online} dynamical system by exploring the
        state-space of the quadcopter while remaining at a stable hover. Our goal is to show that
        our method can efficiently and effectively explore the state-space of the quadcopter
        (including body linear and angular velocities) in order to generate data for learning a
        transition model of the quadcopter for model-based control. In addition, we show that
        the exploratory motions improve the quality of data generated for
        learning while exploiting and respecting the stable hover equilibrium in a single execution
        of the robotic system~\cite{abraham2019activelearning}.

        An LQR policy is used to keep the vehicle hovering while a local linear model (centered
        around the hover) is used for planning exploratory motions.
        The stochastic model of the quadcopter is of the
        form~\cite{gal2016improving}
        \begin{equation} \label{eq:stochastic_dynamics}
            dx \sim \mathcal{N}(f_\theta(x, u), \sigma_\theta(x))
        \end{equation}
        where $\mathcal{N}$ is a normal distribution with mean $f_\theta$, and variance
        $\sigma_\theta(x)$, and the change in the state is given by $dx\in \mathbb{R}^n$. Here,
        $f(x, u;\theta) = f_\theta(x, u) : \mathbb{R}^{n \times m} \to \mathbb{R}^n$ specifies a
        neural-network model of the dynamics and $\sigma(x; \theta) = \sigma_\theta(x)$ is a
        diagonal Gaussian $\sigma_\theta(x): \mathbb{R}^n \to \mathbb{R}^n$ which defines the
        uncertainty of the transition model at state $x$ all parameterized by the parameters
        $\theta$.

        We use $\text{KL-E}^3$ to enable the quadcopter to explore with respect to the variance of
        the model (that is, exploration in the state-space is generated based on how uncertain the
        transition model is at that state). In a similar manner as done in the previous subsection,
        we use a Boltzmann softmax function to create the distribution
        \begin{equation}
            p(s) = \frac{\exp(c \sigma_\theta(s))}{\int_{\mathcal{S}^v} \exp(c \sigma_\theta(\bar{s})) d\bar{s}}.
        \end{equation}
        A more complex target distribution can be built (see~\cite{wilson_TASE_param_id,
        abraham2019activelearning}), however; due to the over-parameterization of the neural-network model,
        using such methods would require significant computation.

        The stochastic model is optimized by maximizing the log likelihood of the model using the
        likelihood function
        \begin{equation}
            \mathcal{L} = \mathcal{N}( dx \mid f_\theta(x, u), \sigma_\theta(x))
        \end{equation}
        where updates to the parameters $\theta$ are defined through the gradient of the log
        likelihood function:
        \begin{equation}
            \theta \gets \theta + \alpha \sum_k \nabla_\theta \log \mathcal{L}.
        \end{equation}
        Here, a batch of $K$ measurements $\{ \hat{x}_k, d\hat{x}_k, u_k \}_{k=1}^K$ are uniformly
        sampled from the data buffer $\mathcal{D}$ where the subscript $k$ denotes the $k^\text{th}$
        time. A variation of Alg.~\ref{alg:KLE3} for model learning is provided in the
        Appendix in Algorithm~\ref{alg:KLE3_model_learning}.

        \begin{table}[]
            \centering
            \begin{tabular}{ccc}
            \hline
            Method          & Average Power Loss    & Average $\| u \|$    \\ \hline \hline
            $\text{\textbf{KL-E}}^3$ & 0.16 +- 0.0130   & 0.68 +- 0.0043 \\ \hline
            Inf. Max*    & 0.59 +- 0.0463 & 1.32 +- 0.3075 \\ \hline
            Normal 0.1*  & 1.41 +- 0.0121  & 0.72 +- 0.0016 \\ \hline
            OU 0.3       & 2.73 +- 0.0228   & 1.17 +- 0.0152 \\ \hline
            OU 0.1       & 0.97 +- 0.0096   & 0.73 +- 0.0033 \\ \hline
            OU 0.01*     & 0.10 +- 0.0007   & 0.67 +- 0.0002 \\ \hline
            Uniform 0.1* & 0.84 +- 0.0090   & 0.69 +- 0.0004 \\ \hline
            \end{tabular}
            \caption{
                Comparison of our method against various methods for state-space exploration using a
                quadcopter. Each method uses the same base stabilization policy which maintains
                hover height and is instantiated and run once for 1200 time steps. Data from a
                single execution is used to generate a neural network dynamics model. This is
                repeated $20$ times to estimate the performance of each method. Methods with (*)
                were unable to generate a dynamics model that completed the tracking objective.
                            }
            \label{fig:model_learning_metrics}
        \end{table}

        \begin{figure}[!]
            \centering
            \begin{subtable}[b]{0.24\textwidth}
                \centering
                \includegraphics[width=\linewidth]{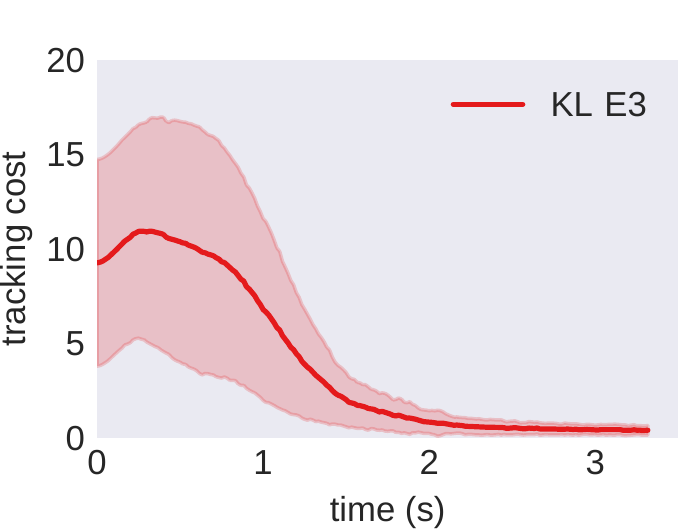}
                \label{fig:benchmarks}
            \end{subtable}
            \begin{subtable}[b]{0.24\textwidth}
                \centering
                \includegraphics[width=\linewidth]{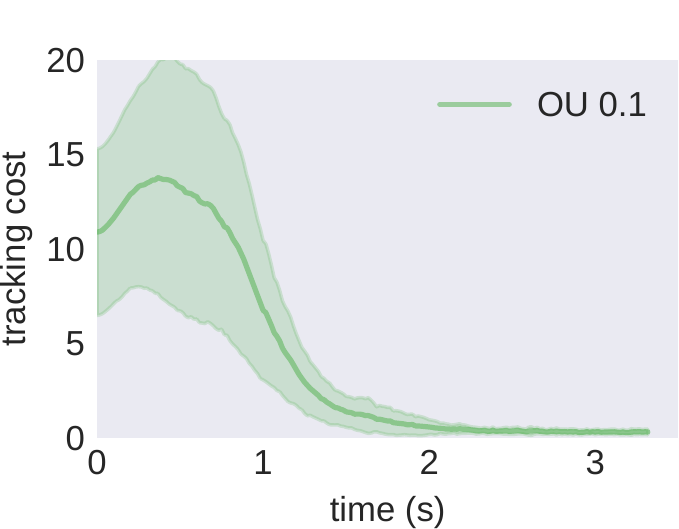}
                \label{fig:benchmarks}
            \end{subtable}
            \begin{subtable}[b]{0.24\textwidth}
                \centering
                \includegraphics[width=\linewidth]{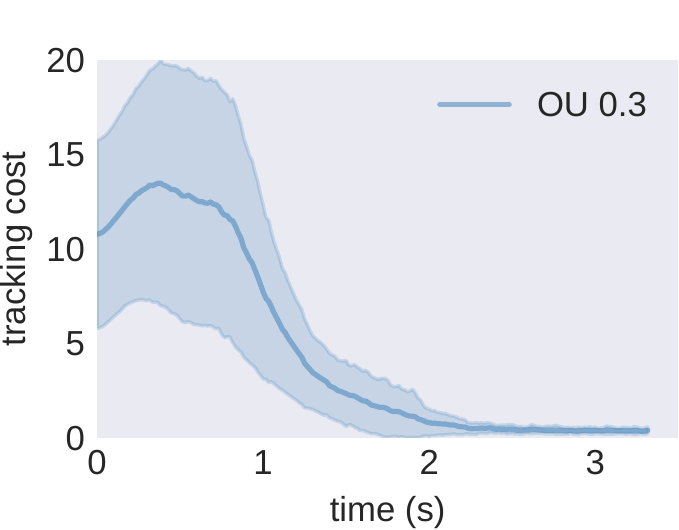}
                \label{fig:benchmarks}
            \end{subtable}
            \caption{
                Learned quadcopter model evaluations on a model-based tracking objective. Our method is
                able to generate a model that performs comparably to OU noise at $0.1$ and $0.3$ noise levels
                while using less energy through dynamic exploration.
            }
            \label{fig:model_learning_eval}
        \end{figure}

        We compare our method against time-correlated Ornstein-Uhlenbeck (OU)
        noise~\cite{uhlenbeck1930theory}, uniform and normally distributed random noise at different
        noise levels, and using a nonlinear dynamics variant of the information maximizing method
        in~\cite{abraham2019activelearning, wilson_TASE_param_id} which directly maximizes the
        variance of the model subject to the equilibrium policy. Each simulation is run using the
        LQR controller as a equilibrium policy for a single execution of the robot (no episodic
        resets) for $1200$ time steps. During this time, data is collected and stored in the buffer $\mathcal{D}$.
        Our method and the information maximizing method use the data in the stored buffer to update
        the variance $\sigma_\theta(x)$, guiding the exploration process. However, for
        evaluation of the transition model, we separately learn a model using the data that has been collected as a gauge
        for the utility of the collected data for each method. A stochastic model is learned by
        sampling a batch of $200$ measurements offline from the buffer using $2000$ gradient
        iterations. The model is evaluated for target tracking using stochastic model-based
        control~\cite{williams2016aggressive} over a set of uniformly randomly generating target
        locations $\mathcal{U}(-2,2)\in \mathbb{R}^3$.

        We first illustrate that our method is more energetically efficient compared to other
        methods in Table~\ref{fig:model_learning_metrics}. Here, energy is calculated using the
        resulting thrust of the quadcopter and we show the average commanded action $u$ over the
        execution of the quadrotor in time. Our method is shown to be more energetically efficient
        (due to the direct exploitation of the equilibrium policy and the ergodic exploration
        process). Furthermore, our method is able to generate measurements in the state-space that
        can learn a descriptive stochastic model of the dynamics for model-based tracking control
        (methods that could not learn a model for tracking control are indicated with a [*]). The
        resulting methods that could generate a model were comparable to our method (see
        Fig.~\ref{fig:model_learning_eval}), however; our method is able to directly target the
        regions of uncertainty (see Fig.~\ref{fig:collected_data}) through dynamic exploration
        allowing the quadcopter to use less energy (and more directed exploratory actions).

        \begin{figure}[th!]
            \centering
            \includegraphics[width=\linewidth]{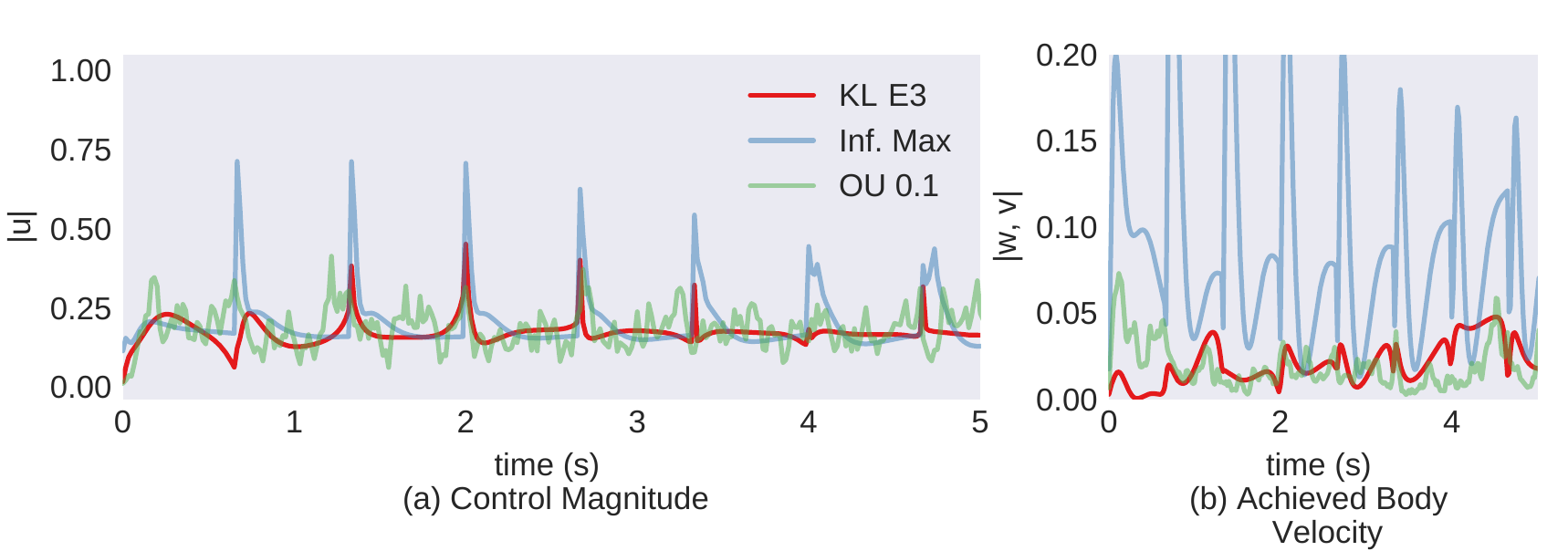}
              \caption{
              Control signal $\Vert u(t) \Vert$ and resulting body linear and angular velocities
              $\omega, v$ for the quadcopter system using our method, information maximization, and
              OU noise with $0.1$ maximum noise level. Our method generates smoother control signals
              while exploring the necessary regions of the state-space without destabilizing the
              system.
              }
             \label{fig:collected_data}
        \end{figure}

        Our last example illustrates how method can be used to aide exploration for off-policy robot
        skill learning methods by viewing the learned skill as an equilibrium policy.

    \subsection{Robot Skill Learning}

        In our last example, we explore $\text{KL-E}^3$ for improving robot skill learning (here we consider off-policy
        reinforcement learning). For all examples, we assume that the learned policy is the equilibrium
        policy and is simultaneously learned and utilized for safe exploration. As a result, we cannot confirm Lyaponov
        attractiveness, but assume that the learned policy will eventually yield the Lyaponov property.
        Thus, our goal is to show that we can consider a robot skill as being in equilibrium (using a feedback policy)
        where our method can explore within the vicinity of the robot skill in an intentional manner, improving the
        learning process.

        In many examples of robot skill learning, a common mode of failure is that the resulting
        learned skill is highly dependent on the quality of the distribution of data generated that
        is used for learning. Typically, these methods use the current iteration of the learned
        skill (which is often referred to as a policy) with added noise (or have a stochastic
        policy) to explore the action space. Often the added noise is insufficient towards gaining
        informative experience which improves the quality of the policy. Here, we show that our
        method can improve robot skill learning by generating dynamic coverage and exploration
        around the learned skill, reducing the likelihood of suboptimal solutions, and improving the
        efficiency of these methods.

        We use deep deterministic policy gradient (DDPG)~\cite{lillicrap2015continuous} as our
        choice of off-policy method.  Given a set of data, DDPG calculates a Q-function defined as
        \begin{equation}
            Q(x, u) = \mathbb{E}\left[ r(x, u) + \gamma Q(x', \mu(x')) \right]
        \end{equation}
        where $r(x, u) : \mathbb{R}^{n \times m} \to \mathbb{R}$ is a reward function, $x'$ is the
        next state subject to the control $u$, the expectation $\mathbb{E}$ is taken with respect to
        the states, $0 >\gamma > 1$ is known as a discounting factor~\cite{sutton2018reinforcement},
        and the function $Q(x, u) : \mathbb{R}^{n\times m} \to \mathbb{R}$ maps the utility of a
        state and how the action at that state will perform in the next state given the policy
        $\mu(x)$. DDPG simultaneously learns $Q(s)$ and a policy $\mu(x)$ by sampling from a set of
        collected states, actions, rewards, and their resulting next state. We refer the reader to
        the pseudo-code of DDPG in~\cite{lillicrap2015continuous}.

        \begin{figure}
            \centering
            \includegraphics[width=\linewidth]{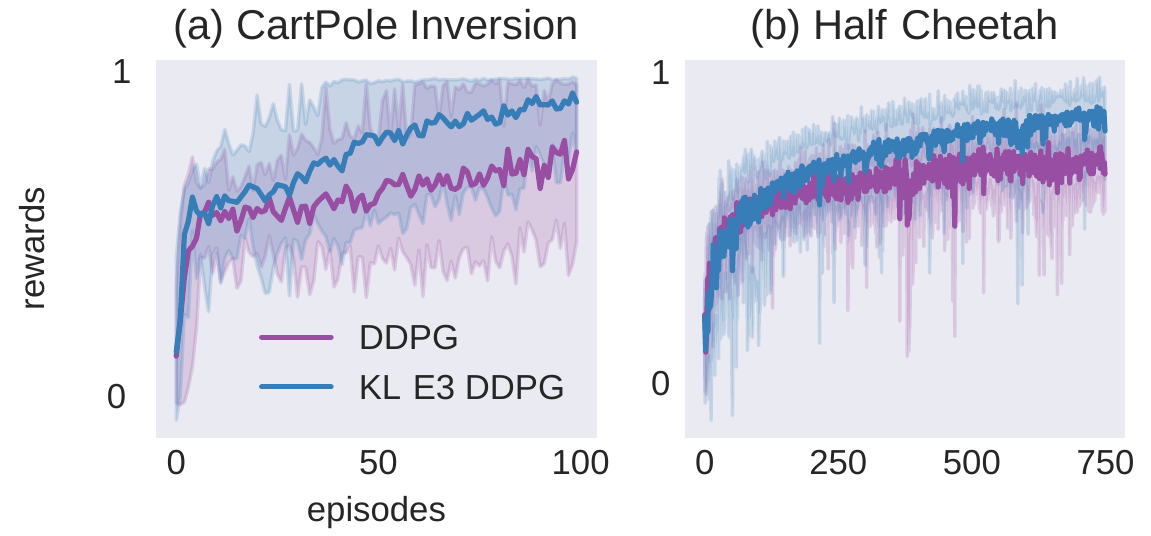}
              \caption{
              Comparison of $\text{KL-E}^3$ enhanced DDPG against DDPG using the cart pole inversion
              and the half cheetah running tasks. $\text{KL-E}^3$ provides a more informative
              distribution of data which assists the learning process, improves the overall performance, and
              achieves better performance faster for DDPG.
              }
             \label{fig:ddpg_comp}
        \end{figure}

        Our method uses the learned $Q(x,u)$ and $\mu(x)$ as the target distribution and the
        equilibrium policy respectively. We modify the Q function such that it becomes a
        distribution using a Boltzmann softmax
        \begin{equation}\label{eq:q_soft}
            p(s) = \frac{\exp(c Q(s))}{\int_{\mathcal{S}^v} \exp( c Q(\bar{s}))d\bar{s} }
        \end{equation}
        where $s \in \mathbb{R}^{n+m}$ includes both states and actions. This form of
        Equation~(\ref{eq:q_soft}) has been used previously for inverse reinforcement
        learning~\cite{biyik2018batch, brown2019risk}. Here, Eq.~(\ref{eq:q_soft}) is used as a
        guide for the ergodic exploration where our exploration is centered around the learned
        policy and the utility of the learned skill (Q-function). Since most reinforcement learning
        deals with large state-spaces, we use the approximation to the time-averaged statistics
        in~(\ref{eq:jensen_sigma}) to improve the computational efficiency of our algorithm. A
        parameterized dynamics model is built using the first $200$ points of each simulation (see
        Appendix for more detail) and updated as each trial continues. OU noise
        is used for exploration in the DDPG comparison with the same parameters shown
        in~\cite{lillicrap2015continuous}. We provide a pseudo-code of a $\text{KL-E}^3$ enhanced
        DDPG in the Appendix Alg.~\ref{alg:KLE3_DDPG}.

        Our method is tested on the cart pole inversion and the half-cheetah running task (see
        Figure~\ref{fig:ddpg_comp} for results). For both robotic systems, $\text{KL-E}^3$ is shown to
        improve the overall learning process, making learning a complex robot skill more sample
        efficient. Specifically, inverting the cart pole starts to occur within $50$
        episodes and the half cheetah begins generating running gaits within $250$ episodes of the
        half cheetah (each episode consists of $200$ time steps of each environment). In contrast,
        DDPG alone generates suboptimal running (as shown in
        (\url{https://sites.google.com/view/kle3/home})) and unstable cart inversion attempts. Because
        our method is able to explore within the vicinity of the learned skill in an intentional, ergodic
        manner, it is able to quickly learn skills and improve the overall quality of the exploration.

\section{Conclusion} \label{sec:conc}

    We present $\text{KL-E}^3$, a method which is shown to enable robots to actively generate
    informative data for various learning goals from equilibrium policies. Our method synthesizes
    ergodic coverage using a KL-divergence measure which generates data through exploiting dynamic
    movement proportional to the utility of the data. We show that hybrid systems theory can be used
    to synthesize a schedule of exploration actions that can incorporate learned policies and models
    in a systematic manner. Last, we present examples that illustrate the effectiveness of our
    method for collecting and generating data in an ergodic manner and provide theoretical analysis
    which bounds our method through Lyapunov attractiveness.

\section*{Acknowledgment}

    This material is based upon work supported by the National Science Foundation under Grants CNS 1837515.
    Any opinions, findings and conclusions or recommendations expressed in this material are those of the authors and
    do not necessarily reflect the views of the aforementioned institutions.

\appendices

\section{Proofs} \label{app:proofs}

        \noindent
        \subsection{Proof of Proposition 1}
        \begin{proof}
            Let us define the trajectory $x(t)$ switching from $\mu(x(\tau)) \to \mu_\star(\tau)$ for
            a duration of $\lambda$ as
            \begin{align}\label{eq:switched_traj}
                x(t) = x(t_0) & + \int_{t_0}^\tau f(x, \mu(x))dt
                + \int_{\tau}^{\tau + \lambda} f(x, \mu_\star) dt \\
                & + \int_{\tau + \lambda}^{t_f} f(x, \mu(x))dt \nonumber
            \end{align}
            where we drop the dependence on time for clarity. Taking the derivative of
            (\ref{eq:kl_objective}), using (\ref{eq:switched_traj}), with respect to the duration
            time $\lambda$ gives us the following expression:
            \begin{equation}\label{eq:kl_sensitivity1}
                \frac{\partial}{\partial \lambda} D_{\text{KL}}
                = -\sum_i  \frac{ p(s_i) }{q(s_i)} \int_{\tau + \lambda}^{t_f}
                \frac{\partial \psi}{ \partial x} ^\top \frac{\partial x}{\partial \lambda}
                dt.
            \end{equation}
            We obtain $\frac{\partial x}{\partial \lambda}$ by using Leibniz's rule to evaluate the derivative of
            (\ref{eq:switched_traj}) with respect to $\lambda$ at the integration boundary conditions to obtain the
            expression
            \begin{equation}\label{eq:xdlambda}
                \frac{\partial x (t) }{\partial \lambda} = (f_2 - f_1)
                + \int_{\tau + \lambda}^{t} \left( \frac{\partial f}{\partial x}
                    + \frac{\partial f}{\partial u} \frac{\partial \mu}{\partial x}\right) ^\top
                \frac{\partial x(s)}{\partial \lambda} ds
            \end{equation}
            where $s$ is a place holder variable for time, $f_2 = f(x(\tau + \lambda), \mu_\star(\tau + \lambda))$ and
            $f_1 = f(x(\tau + \lambda), \mu(x(\tau + \lambda))$. Noting that $\frac{\partial x}{\partial
            \lambda}$ is a repeated term under the integral, (\ref{eq:xdlambda}) is a linear convolution with
            initial condition $\frac{\partial x(\tau + \lambda)}{ \partial \lambda} = f_2 - f_1$.  As a result, we can
            rewrite (\ref{eq:xdlambda}) using a state-transition matrix~\cite{anderson2007optimal}
            \begin{equation*}
                \Phi(t, \tau + \lambda) = \exp \left(
                    \left(\frac{\partial f}{\partial x}
                    + \frac{\partial f}{\partial u} \frac{\partial \mu}{\partial x}\right)^\top
                    \left( t - \tau \right)
                \right)
            \end{equation*}
            with initial condition $f_2 - f_1$ as
            \begin{equation}\label{eq:state_trans}
                \frac{\partial x(t)}{\partial \lambda} = \Phi(t,\tau + \lambda) (f_2 - f_1).
            \end{equation}
            Using (\ref{eq:state_trans}) in (\ref{eq:kl_sensitivity1}) gives the following expression
            \begin{equation}\label{eq:kl_sensitivity2}
                \frac{\partial}{\partial \lambda} D_{\text{KL}}  =
                    -\sum_i  \frac{ p(s_i) }{q(s_i)} \int_{\tau + \lambda}^{t_f}
                    \frac{\partial \psi}{ \partial x}^\top \Phi(t, \tau + \lambda)
                    dt \left(f_2 - f_1 \right).
            \end{equation}
            Taking the limit as $\lambda \to 0$ we then set
            \begin{equation} \label{eq:adjoint_integral}
                \rho(\tau)^\top = -\sum_i  \frac{ p(s_i) }{q(s_i)} \int_{\tau}^{t_f}
                \frac{\partial \psi}{ \partial x}^\top \Phi(t, \tau  )
                dt
            \end{equation}
            in (\ref{eq:kl_sensitivity2}) which results in
            \begin{equation}
                \frac{\partial}{\partial \lambda} D_{\text{KL}} = \rho(\tau)^\top\left(f_2 - f_1 \right).
            \end{equation}
            Taking the derivative of (\ref{eq:adjoint_integral}) with respect to time $\tau$ yields the following:
            \begin{multline*}
                \frac{\partial}{\partial \tau} \rho(\tau)^\top =
                    \sum_i \frac{p(s_i)}{q(s_i)}\frac{\partial \psi}{\partial x}^\top \Phi(\tau, \tau) \\
                    - \sum_i \frac{p(s_i)}{q(s_i)}\int_\tau^{t_f}\frac{\partial \psi}{\partial x}^\top \frac{\partial }{\partial \tau}
                    \Phi(t, \tau) dt.
            \end{multline*}
            Since $\Phi(\tau, \tau) = 1$, and $$\frac{\partial}{\partial \tau} \Phi(t, \tau) = -\Phi(t, \tau)\left( \frac{\partial
            f}{\partial x} + \frac{\partial f}{\partial u} \frac{\partial \mu}{\partial x}\right) ,$$
            we can show that
            \begin{multline*}
                \frac{\partial}{\partial \tau} \rho(\tau)^\top =
                    \sum_i \frac{p(s_i)}{q(s_i)}\frac{\partial \psi}{\partial x}^\top \\
                    -
                    \underbrace{
                        \left( -\sum_i \frac{p(s_i)}{q(s_i)}\int_\tau^{t_f}\frac{\partial \psi}{\partial x}^\top\Phi(t, \tau) dt \right)
                        }_{=\rho(\tau)^\top} \left( \frac{\partial f}{\partial x} + \frac{\partial f}{\partial u} \frac{\partial \mu}{\partial x}\right) .
            \end{multline*}
            Taking the transpose, we can show that $\rho(t)$ can be solved
            backwards in time with the differential equation
            \begin{equation}
                \dot{\rho}(t) =
                    \sum_i \frac{p(s_i)}{q(s_i)} \frac{\partial \psi}{\partial x}
                     -
                     \left(
                    \frac{\partial f}{\partial x} + \frac{\partial f}{\partial u} \frac{\partial \mu}{\partial x}
                 \right)^\top \rho(t)
            \end{equation}
            with final condition $\rho(t_f) = \mathbf{0}$.
        \end{proof}

        \noindent
        \subsection{Proof of Theorem 1}
        \begin{proof}
            Writing the integral form of the Lyapunov function switching between $\mu(x(t))$ and
            $\mu_\star(t)$ at time $\tau$ for a duration of time $\lambda$ starting at $x(0)$ can be
            written as
            \begin{align}\label{eq:lyap_switch}
                V(x^\tau_\lambda(t)) = V(x(0)) + &\int_{0}^{\tau} \nabla V \cdot f(x(s), \mu(x(s)) ) ds \nonumber \\
                + & \int_{\tau}^{\tau+\lambda} \nabla V \cdot f(x(s), \mu_\star(s)) ds \nonumber \\
                + & \int_{\tau+\lambda}^{t}  \nabla V \cdot f(x(s), \mu(x(s)) ) ds
            \end{align}
            where $s$ is a place holder for time. Expanding $f(x,u)$ to $g(x) + h(x)u$ and using
            (\ref{eq:explr_actions}) we can show the following identity:
            \begin{align}\label{eq:lyap_chain}
                \nabla V \cdot f(x, \mu_\star) & = \nabla V \cdot g(x) + \nabla V \cdot h(x) \mu_\star \nonumber \\
                & = \nabla V \cdot g(x) + \nabla V \cdot h(x) \mu(x) \nonumber \\
                                & \qquad \qquad - \nabla V \cdot h(x) \mathbf{R}^{-1} h(x)^\top \rho \nonumber \\
                & = \nabla V \cdot f(x, \mu(x)) - \nabla V \cdot h(x) \mathbf{R}^{-1}h(x)^\top \rho
            \end{align}
            Using (\ref{eq:lyap_chain}) in (\ref{eq:lyap_switch}), we can show that
            \begin{align}\label{eq:lyap_cool_form}
                V(x^\tau_\lambda(t)) &= V(x(0))
                        + \int_{0}^{t} \nabla V \cdot f(x(s), \mu(x(s)) ) ds \nonumber \\
                        & \quad \quad - \int_{\tau}^{\tau + \lambda} \nabla V \cdot h(x(s)) \mathbf{R}^{-1} h(x(s))^\top \rho(s) ds \nonumber\\
                 &= V(x(t)) \nonumber\\
                 & \quad \quad- \int_{\tau}^{\tau + \lambda} \nabla V \cdot h(x(s)) \mathbf{R}^{-1} h(x(s))^\top \rho(s) ds
            \end{align}
            where  $x(t)$ is given by (\ref{eq:trajectory}).

            Letting the largest value of $\nabla V \cdot h(x(t)) \mathbf{R}^{-1} h(x(t))^\top \rho(t)$ be given by
            \begin{equation}
                \beta = \sup_{s \in \left[ \tau, \tau + \lambda \right]} - \nabla V \cdot h(x(s)) \mathbf{R}^{-1}h(x(s))^\top \rho(s)>0,
            \end{equation}
            we can approximate (\ref{eq:lyap_cool_form}) as
            \begin{align*}
                V(x^\tau_\lambda (t)) &= V(x(t))- \int_{\tau}^{\tau + \lambda} \nabla V \cdot h(x(s))\mathbf{R}^{-1} h(x(s))^\top \rho(s) ds\\
                & \le V(x(t)) + \beta \lambda.
            \end{align*}
            Subtracting both side by $V(x(t))$ gives the upper bound
            \begin{equation}
                V(x^\tau_\lambda(t)) - V(x(t)) \le \beta \lambda
            \end{equation}
            which quantifies how much (\ref{eq:explr_actions}) deviates from the equilibrium conditions in (\ref{eq:lyap_cond}).
        \end{proof}

\section{Algorithmic Details} \label{app:bayes_opt}

    This appendix provides additional details for each learning goal presented in
    Section~\ref{sec:ex}. This includes pseudo-code for each method and parameters to implement our
    examples (see Table.~\ref{tab:param}). We provide videos of each example and demo code in
    (\url{https://sites.google.com/view/kle3/home}).

    \begin{table*}
        \centering
        \begin{tabular}{@{}cccccccl@{}}
        \toprule
        Example                                                                                 & $t_H$                      & $\lambda$                                                              & $f(x,u)$                                                                                                                      & $\mu(x)$                                                                             & $\mathbf{R}$                                 & $\Sigma$                             & $N$ samples \\ \midrule
        \begin{tabular}[c]{@{}c@{}}Cart Double Pendulum\\ Bayes. Opt.\end{tabular}              & 0.2 s                      & \begin{tabular}[c]{@{}c@{}}line search \\ $\lambda < t_H$\end{tabular} & \begin{tabular}[c]{@{}c@{}}local linear model \\ at inverted pose\end{tabular}                                                & \begin{tabular}[c]{@{}c@{}}LQR stabilizing\\ policy\end{tabular}                     & 0.1                                          & 0.1                                  & 20          \\
        \begin{tabular}[c]{@{}c@{}}Quadcopter Model\\ Learning\end{tabular}                     & 0.6 s                      & $\lambda = t_H$                                                        & \begin{tabular}[c]{@{}c@{}}local linear model \\ at hoverheight\end{tabular}                                                  & LQR hovering policy                                                                  & $0.5 \mathbf{I}$ & $0.1 \mathbf{I} $                    & 100         \\
        DDPG Cart pole swingup                                                                  & 0.1 s                      & $\lambda = 0.02$ s                                                     & \begin{tabular}[c]{@{}c@{}}neural-net model\\ $\dot{x} = f(x,u; \theta)$\\ learned from data\end{tabular}                     & Learned swingup skill                                                                & $0.01 * 0.99 ^ t$                            & $0.1 \mathbf{I}$                     & 20          \\
        \multicolumn{1}{l}{\begin{tabular}[c]{@{}l@{}}DDPG Half Cheetah\\ running\end{tabular}} & \multicolumn{1}{l}{0.03 s} & \multicolumn{1}{l}{$\lambda = 0.0165 s$}                               & \multicolumn{1}{l}{\begin{tabular}[c]{@{}l@{}}neural-net model\\ $\dot{x} = f(x,u; \theta)$\\ learned from data\end{tabular}} & \multicolumn{1}{l}{\begin{tabular}[c]{@{}l@{}}Learned running \\ skill\end{tabular}} & \multicolumn{1}{l}{$0.01 * 0.99^t$}          & \multicolumn{1}{l}{$0.1 \mathbf{I}$} & 50          \\ \bottomrule
    \end{tabular} \caption{
        Parameters used for each method presented in Section~\ref{sec:ex} . $\mathbf{I}$ indicates an
        identity matrix of size $m\times m$, $n \times n$
        for the quadcopter model learning, and $v \times v$ where $v=n+m$ used in both
        DDPG examples. Neural net model parameters $\theta$ are learned by minimizing the error
        $\Vert \dot{x} - f(x,u ;\theta) \Vert^2$ over a subset of $K$ data points where $\dot{x}
        \approx (x(t_1) - x(t_0)) / dt$ and $dt$ is the time step of the system. A time-decaying
        $\mathbf{R}$ is used for both DDPG examples so that the largest exploring occurs earlier on
        in the episode to better assist the skill learning.
    } \label{tab:param}
    \end{table*}

        \begin{algorithm}[h]
            \caption{Bayesian Optimization}
            \begin{algorithmic}[1]
                \State \textbf{init:} Gaussian prior on objective $\phi$, data set $\mathcal{D}$, $i=0$
                \While{task not done}
                    \State update posterior distribution on $\phi$ using $\mathcal{D}$
                    \State build acquisition function using current posterior on $\phi$
                    \State active learner finds the maximum $x_i$ of the acquisition function
                    \State active learner samples $y_i = \phi(x_i)$
                    \State $i \gets i + 1$
                    \State set $x(t_i) = \hat{x}(t_i)$
                \EndWhile
            \State return max $y_i$ of $\phi(x)$ and argmax $x_i$
            \end{algorithmic}
            \label{alg:bayes_opt}
        \end{algorithm}

        \begin{algorithm}[H]
            \caption{$\text{KL-E}^3$ for Bayesian Optimization}
            \begin{algorithmic}[1]
                \State \textbf{init:} see Alg.~\ref{alg:KLE3} and Alg.~\ref{alg:bayes_opt}
                \While{task not done}
                    \State set $x(t_i) = \hat{x}(t_i)$
                    \State $\triangleright$ simulation loop (see Alg.~\ref{alg:KLE3} lines 4-18)
                    \State get $\delta \mu(t)$ from simulation
                    \State $\triangleright$ apply to real robot (see Alg.~\ref{alg:KLE3} lines 20-28)
                    \State chose $\tau \in [t_i, t_i+t_H]$ and $\lambda \le t_H$
                    \State $\triangleright$
                    \For{$t \in [t_i, t_{i+1}] $}
                        \If{ $t \in [\tau, \tau + \lambda]$}
                            \State apply $\mu_\star(t) = \delta\mu_\star(t) + \mu(\hat{x}(t))$
                        \Else
                            \State apply $\mu(x(t))$
                        \EndIf
                        \If{time to sample}
                            \State measure true state $\hat{x}(t)$ and $y(t)= \phi(\hat{x}(t))$
                            \State append to data set $\mathcal{D} \gets \{ \hat{x}(t), y(t)\}$
                        \EndIf
                    \EndFor
                    \State update posterior on $\phi$ given $\mathcal{D}$
                    \State update $p(s)$ from posterior
                    \State $i \gets i + 1$
                \EndWhile
            \end{algorithmic}
            \label{alg:KLE3_bayes_opt}
        \end{algorithm}

\vfill
    \begin{algorithm}[H]
        \caption{$\text{KL-E}^3$ for Model Learning}
        \begin{algorithmic}[1]
            \State \textbf{init:} see Alg.~\ref{alg:KLE3}, generate initial parameter $\theta^0$ for model (\ref{eq:stochastic_dynamics})
            \While{task not done}
                \State set $x(t_i) = \hat{x}(t_i)$
                \State $\triangleright$ simulation loop (see Alg.~\ref{alg:KLE3} lines 4-18)
                \State get $\delta \mu(t)$ from simulation
                \State $\triangleright$ apply to real robot
                \State chose $\tau \in [t_i, t_i+t_H]$ and $\lambda \le t_H$
                \For{$t \in [t_i, t_{i+1}] $}
                    \If{ $t \in [\tau, \tau + \lambda]$}
                        \State apply $\mu_\star(t) = \delta\mu_\star(t) + \mu(\hat{x}(t))$
                    \Else
                        \State apply $\mu(x(t))$
                    \EndIf
                    \If{time to sample}
                        \State measure state $\hat{x}(t)$, change in state $d \hat{x}(t)$, and applied control $u(t)$
                        \State append to data set $\mathcal{D} \gets \{ \hat{x}(t), d\hat{x}(t), u(t)\}$
                    \EndIf
                \EndFor
                \State sample $K$ batch $\{\hat{x}_k, d\hat{x}_k, u_k \}_{k=1}^K$
                \State $\theta^{i+1} \gets \theta^{i} + \sum_k \nabla_\theta \log \mathcal{L}$ given batch
                \State update $p(s)$ from $\sigma_{\theta^{i+1}}$
                \State $i \gets i + 1$
            \EndWhile
        \end{algorithmic}
        \label{alg:KLE3_model_learning}
    \end{algorithm}

    \begin{algorithm}[H]
        \caption{$\text{KL-E}^3$ enhanced DDPG}
        \begin{algorithmic}[1]
            \State \textbf{init:} see Alg.~\ref{alg:KLE3}, ~\cite{lillicrap2015continuous},
            \While{task not done}
                \State set $x(t_i) = \hat{x}(t_i)$
                \State $\triangleright$ simulation loop (see Alg.~\ref{alg:KLE3} lines 4-18)
                \State get $\delta \mu(t)$ from simulation and apply to real robot
                \State chose $\tau \in [t_i, t_i+t_H]$ and $\lambda \le t_H$
                \For{$t \in [t_i, t_{i+1}] $}
                    \If{ $t \in [\tau, \tau + \lambda]$}
                        \State apply $\mu_\star(t) = \delta\mu_\star(t) + \mu(\hat{x}(t))$
                    \Else
                        \State apply $\mu(x(t))$
                    \EndIf
                    \If{time to sample}
                        \State measure state $\hat{x}(t)$, next state $\hat{x}'(t)$, applied control $u(t)$, reward $r(t)$
                        \State append $\mathcal{D} \gets \{ \hat{x}(t), \hat{x}'(t), u(t), r(t)\}$
                    \EndIf
                \EndFor
                \If{time to update and buffer is large enough}
                    \State update $Q$, $\mu$ from ~\cite{lillicrap2015continuous}
                    \State update $p(s)$ using $Q$ (\ref{eq:q_soft})
                \EndIf
                \State $i \gets i + 1$
            \EndWhile
        \end{algorithmic}
        \label{alg:KLE3_DDPG}
    \end{algorithm}

\newpage

\bibliographystyle{IEEEtran/IEEEtran.bst}
\bibliography{references}

\begin{thebibliography}{10}
\providecommand{\url}[1]{#1}
\csname url@rmstyle\endcsname
\providecommand{\newblock}{\relax}
\providecommand{\bibinfo}[2]{#2}
\providecommand\BIBentrySTDinterwordspacing{\spaceskip=0pt\relax}
\providecommand\BIBentryALTinterwordstretchfactor{4}
\providecommand\BIBentryALTinterwordspacing{\spaceskip=\fontdimen2\font plus
\BIBentryALTinterwordstretchfactor\fontdimen3\font minus
  \fontdimen4\font\relax}
\providecommand\BIBforeignlanguage[2]{{%
\expandafter\ifx\csname l@#1\endcsname\relax
\typeout{** WARNING: IEEEtran.bst: No hyphenation pattern has been}%
\typeout{** loaded for the language `#1'. Using the pattern for}%
\typeout{** the default language instead.}%
\else
\language=\csname l@#1\endcsname
\fi
#2}}

\bibitem{axelsson_JOTA_modeinsertion}
H.~Axelsson, Y.~Wardi, M.~Egerstedt, and E.~Verriest, ``Gradient descent
  approach to optimal mode scheduling in hybrid dynamical systems,''
  \emph{Journal of Optimization Theory and Applications}, vol. 136, no.~2, pp.
  167--186, 2008.

\bibitem{miller2016ergodic}
L.~M. Miller, Y.~Silverman, M.~A. MacIver, and T.~D. Murphey, ``Ergodic
  exploration of distributed information,'' \emph{IEEE Transactions on
  Robotics}, vol.~32, no.~1, pp. 36--52, 2016.

\bibitem{mavrommati2018real}
A.~Mavrommati, E.~Tzorakoleftherakis, I.~Abraham, and T.~D. Murphey,
  ``Real-time area coverage and target localization using receding-horizon
  ergodic exploration,'' \emph{IEEE Transactions on Robotics}, vol.~34, no.~1,
  pp. 62--80, 2018.

\bibitem{ayvali2017ergodic}
E.~Ayvali, H.~Salman, and H.~Choset, ``Ergodic coverage in constrained
  environments using stochastic trajectory optimization,'' in
  \emph{International Conference on Intelligent Robots and Systems}, 2017, pp.
  5204--5210.

\bibitem{abraham2017ergodic}
I.~Abraham, A.~Prabhakar, M.~J. Hartmann, and T.~D. Murphey, ``Ergodic
  exploration using binary sensing for nonparametric shape estimation,''
  \emph{IEEE robotics and automation letters}, vol.~2, no.~2, pp. 827--834,
  2017.

\bibitem{abraham2018data}
I.~Abraham, A.~Mavrommati, and T.~D. Murphey, ``Data-driven measurement models
  for active localization in sparse environments,'' in \emph{Robotics: Science
  and Systems}, 2018.

\bibitem{abraham2018decentralized}
I.~Abraham and T.~D. Murphey, ``Decentralized ergodic control:
  distribution-driven sensing and exploration for multiagent systems,''
  \emph{IEEE Robotics and Automation Letters}, vol.~3, no.~4, pp. 2987--2994,
  2018.

\bibitem{polyakov2014stability}
A.~Polyakov and L.~Fridman, ``Stability notions and {L}yapunov functions for
  sliding mode control systems,'' \emph{Journal of the Franklin Institute},
  vol. 351, no.~4, pp. 1831--1865, 2014.

\bibitem{kormushev_robotmotorskills_em_rl}
P.~Kormushev, S.~Calinon, and D.~G. Caldwell, ``Robot motor skill coordination
  with {EM}-based reinforcement learning,'' in \emph{International Conference
  on Intelligent Robots and Systems}, 2010, pp. 3232--3237.

\bibitem{reinhart_AuRo_skill_babble}
R.~F. Reinhart, ``Autonomous exploration of motor skills by skill babbling,''
  \emph{Autonomous Robots}, vol.~41, no.~7, pp. 1521--1537, 2017.

\bibitem{mckinnon_multimodal_gp_learning_online}
C.~D. McKinnon and A.~P. Schoellig, ``Learning multimodal models for robot
  dynamics online with a mixture of {G}aussian process experts,'' in
  \emph{International Conference on Robotics and Automation}, 2017, pp.
  322--328.

\bibitem{tan_RSS_sim_to_real}
J.~Tan, T.~Zhang, E.~Coumans, A.~Iscen, Y.~Bai, D.~Hafner, S.~Bohez, and
  V.~Vanhoucke, ``Sim-to-real: Learning agile locomotion for quadruped
  robots,'' in \emph{Proceedings of Robotics: Science and Systems}, 2018.

\bibitem{marco2017virtual}
A.~Marco, F.~Berkenkamp, P.~Hennig, A.~P. Schoellig, A.~Krause, S.~Schaal, and
  S.~Trimpe, ``Virtual vs. real: Trading off simulations and physical
  experiments in reinforcement learning with {B}ayesian optimization,'' in
  \emph{International Conference on Robotics and Automation}, 2017, pp.
  1557--1563.

\bibitem{berkenkamp2017safe}
F.~Berkenkamp, M.~Turchetta, A.~Schoellig, and A.~Krause, ``Safe model-based
  reinforcement learning with stability guarantees,'' in \emph{Advances in
  Neural Information Processing Systems}, 2017, pp. 908--918.

\bibitem{lin2017direct}
T.-C. Lin and Y.-C. Liu, ``Direct learning coverage control based on
  expectation maximization in wireless sensor and robot network,'' in
  \emph{Conference on Control Technology and Applications}, 2017, pp.
  1784--1790.

\bibitem{bourgault2002information}
F.~Bourgault, A.~A. Makarenko, S.~B. Williams, B.~Grocholsky, and H.~F.
  Durrant-Whyte, ``Information based adaptive robotic exploration,'' in
  \emph{International Conference on Intelligent Robots and Systems}, vol.~1,
  2002, pp. 540--545.

\bibitem{arnold1992sobolev}
R.~Arnold and A.~Wellerding, ``On the sobolev distance of convex bodies,''
  \emph{aequationes mathematicae}, vol.~44, no.~1, pp. 72--83, 1992.

\bibitem{precup2001off}
D.~Precup, R.~S. Sutton, and S.~Dasgupta, ``Off-policy temporal-difference
  learning with function approximation,'' in \emph{ICML}, 2001, pp. 417--424.

\bibitem{lillicrap2015continuous}
T.~P. Lillicrap, J.~J. Hunt, A.~Pritzel, N.~Heess, T.~Erez, Y.~Tassa,
  D.~Silver, and D.~Wierstra, ``Continuous control with deep reinforcement
  learning,'' \emph{arXiv preprint arXiv:1509.02971}, 2015.

\bibitem{kaelbling1996reinforcement}
L.~P. Kaelbling, M.~L. Littman, and A.~W. Moore, ``Reinforcement learning: A
  survey,'' \emph{Journal of artificial intelligence research}, vol.~4, pp.
  237--285, 1996.

\bibitem{frazier2018tutorial}
P.~I. Frazier, ``A tutorial on {B}ayesian optimization,'' \emph{arXiv preprint
  arXiv:1807.02811}, 2018.

\bibitem{snoek2012practical}
J.~Snoek, H.~Larochelle, and R.~P. Adams, ``Practical {B}ayesian optimization
  of machine learning algorithms,'' in \emph{Advances in neural information
  processing systems}, 2012, pp. 2951--2959.

\bibitem{calandra2016bayesian}
R.~Calandra, A.~Seyfarth, J.~Peters, and M.~P. Deisenroth, ``{B}ayesian
  optimization for learning gaits under uncertainty,'' \emph{Annals of
  Mathematics and Artificial Intelligence}, vol.~76, no. 1-2, pp. 5--23, 2016.

\bibitem{schwager_robotics_inf_gather}
M.~Schwager, P.~Dames, D.~Rus, and V.~Kumar, ``A multi-robot control policy for
  information gathering in the presence of unknown hazards,'' in \emph{Robotics
  research}, 2017, pp. 455--472.

\bibitem{nguyen_NEURO_incremental_sparse_gp}
D.~Nguyen-Tuong and J.~Peters, ``Incremental online sparsification for model
  learning in real-time robot control,'' \emph{Neurocomputing}, vol.~74,
  no.~11, pp. 1859--1867, 2011.

\bibitem{ucinski_CRC_optimal_meas}
D.~Ucinski, \emph{Optimal measurement methods for distributed parameter system
  identification}.\hskip 1em plus 0.5em minus 0.4em\relax CRC Press, 2004.

\bibitem{mathew2011metrics}
G.~Mathew and I.~Mezi{\'c}, ``Metrics for ergodicity and design of ergodic
  dynamics for multi-agent systems,'' \emph{Physica D: Nonlinear Phenomena},
  vol. 240, no. 4-5, pp. 432--442, 2011.

\bibitem{miller2013trajectory}
L.~M. Miller and T.~D. Murphey, ``Trajectory optimization for continuous
  ergodic exploration,'' in \emph{2013 American Control Conference}, 2013, pp.
  4196--4201.

\bibitem{kullback1951information}
S.~Kullback and R.~A. Leibler, ``On information and sufficiency,'' \emph{The
  annals of mathematical statistics}, vol.~22, no.~1, pp. 79--86, 1951.

\bibitem{kullback1997information}
S.~Kullback, \emph{Information theory and statistics}.\hskip 1em plus 0.5em
  minus 0.4em\relax Courier Corporation, 1997.

\bibitem{sontag1999control}
E.~D. Sontag, ``Control-{L}yapunov functions,'' in \emph{Open problems in
  mathematical systems and control theory}, 1999, pp. 211--216.

\bibitem{khansari2014learning}
S.~M. Khansari-Zadeh and A.~Billard, ``Learning control {L}yapunov function to
  ensure stability of dynamical system-based robot reaching motions,''
  \emph{Robotics and Autonomous Systems}, vol.~62, no.~6, pp. 752--765, 2014.

\bibitem{artstein1983stabilization}
Z.~Artstein, ``Stabilization with relaxed controls,'' \emph{Nonlinear Analysis:
  Theory, Methods \& Applications}, vol.~7, no.~11, pp. 1163--1173, 1983.

\bibitem{abraham2019active}
I.~Abraham, A.~Prabhakar, and T.~D. Murphey, ``Active area coverage from
  equilibrium,'' in \emph{Workshop on Algorithmic Foundations of Robotics},
  2019.

\bibitem{more1994line}
J.~J. Mor{\'e} and D.~J. Thuente, ``Line search algorithms with guaranteed
  sufficient decrease,'' \emph{ACM Transactions on Mathematical Software
  (TOMS)}, vol.~20, no.~3, pp. 286--307, 1994.

\bibitem{zhong2001energy}
W.~Zhong and H.~Rock, ``Energy and passivity based control of the double
  inverted pendulum on a cart,'' in \emph{IEEE International Conference on
  Control Applications}, 2001, pp. 896--901.

\bibitem{bishop2006pattern}
C.~M. Bishop, \emph{Pattern recognition and machine learning}.\hskip 1em plus
  0.5em minus 0.4em\relax Springer, 2006.

\bibitem{sutton2018reinforcement}
R.~S. Sutton and A.~G. Barto, \emph{Reinforcement learning: An
  introduction}.\hskip 1em plus 0.5em minus 0.4em\relax MIT press, 2018.

\bibitem{abraham2019activelearning}
I.~Abraham and T.~D. Murphey, ``Active learning of dynamics for data-driven
  control using koopman operators,'' \emph{IEEE Transactions on Robotics},
  vol.~35, no.~5, pp. 1071--1083, 2019.

\bibitem{fan2017online}
T.~Fan and T.~Murphey, ``Online feedback control for input-saturated robotic
  systems on lie groups,'' \emph{arXiv preprint arXiv:1709.00376}, 2017.

\bibitem{gal2016improving}
Y.~Gal, R.~McAllister, and C.~E. Rasmussen, ``Improving pilco with {B}ayesian
  neural network dynamics models,'' in \emph{Data-Efficient Machine Learning
  workshop, ICML}, vol.~4, 2016.

\bibitem{wilson_TASE_param_id}
A.~D. Wilson, J.~A. Schultz, A.~R. Ansari, and T.~D. Murphey, ``Dynamic task
  execution using active parameter identification with the baxter research
  robot,'' \emph{Transactions on Automation Science and Engineering}, vol.~14,
  no.~1, pp. 391--397, 2017.

\bibitem{uhlenbeck1930theory}
G.~E. Uhlenbeck and L.~S. Ornstein, ``On the theory of the brownian motion,''
  \emph{Physical review}, vol.~36, no.~5, p. 823, 1930.

\bibitem{williams2016aggressive}
G.~Williams, P.~Drews, B.~Goldfain, J.~M. Rehg, and E.~A. Theodorou,
  ``Aggressive driving with model predictive path integral control,'' in
  \emph{IEEE International Conference on Robotics and Automation (ICRA)}, 2016,
  pp. 1433--1440.

\bibitem{biyik2018batch}
E.~B{\i}y{\i}k and D.~Sadigh, ``Batch active preference-based learning of
  reward functions,'' \emph{arXiv preprint arXiv:1810.04303}, 2018.

\bibitem{brown2019risk}
D.~S. Brown, Y.~Cui, and S.~Niekum, ``Risk-aware active inverse reinforcement
  learning,'' \emph{arXiv preprint arXiv:1901.02161}, 2019.

\bibitem{anderson2007optimal}
B.~D. Anderson and J.~B. Moore, \emph{Optimal control: linear quadratic
  methods}.\hskip 1em plus 0.5em minus 0.4em\relax Courier Corporation, 2007.

\end{thebibliography}
\newpage

\begin{IEEEbiography}[{\includegraphics[width=1in,height=1.25in,clip,keepaspectratio]{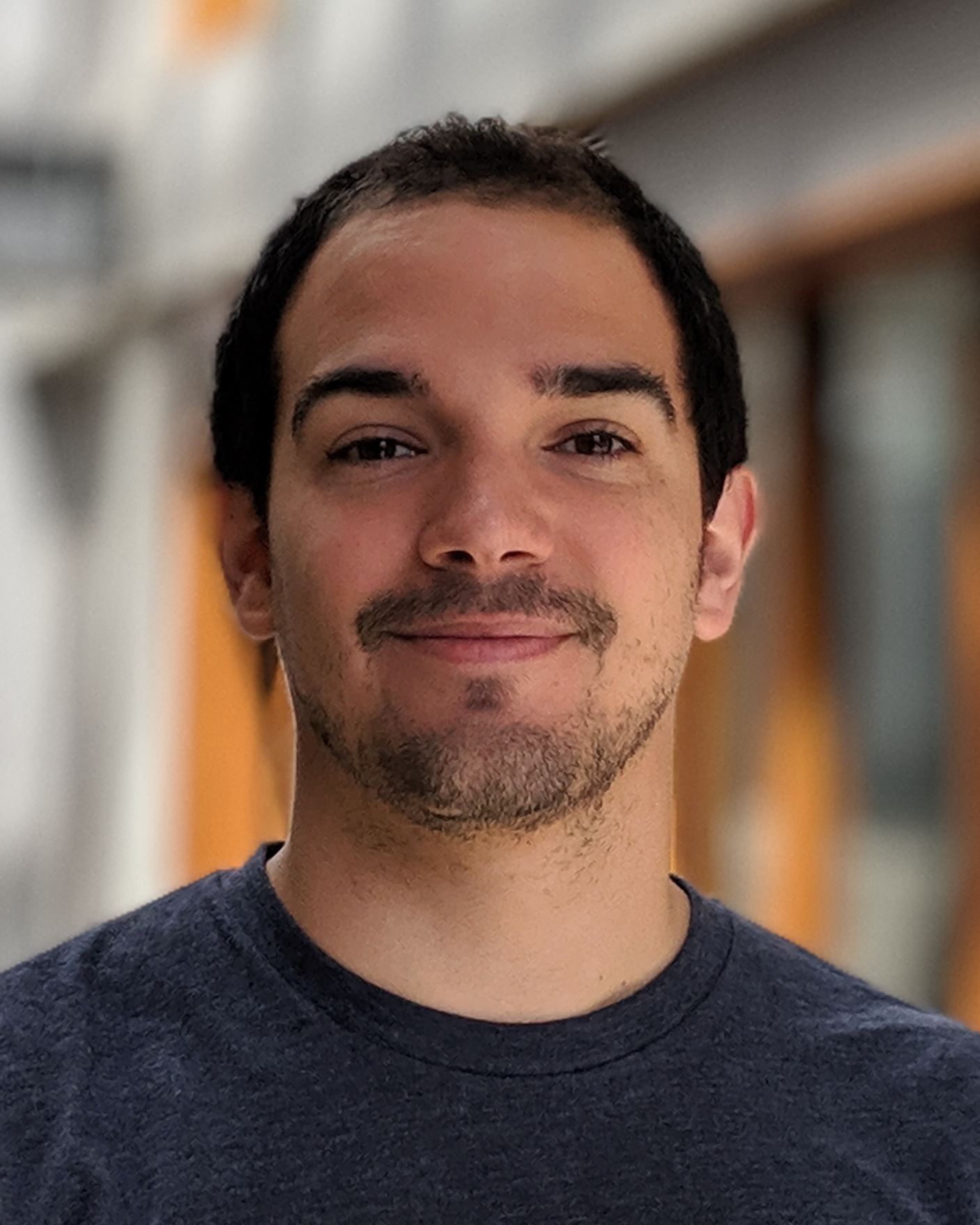}}]{Ian Abraham}
Ian Abraham received the B.S. degree in Mechanical and Aerospace Engineering from Rutgers University and the M.S. degree
in Mechanical Engineering from Northwestern University. He is currently a Ph.D. Candidate at the Center for Robotics and
Biosystems at Northwestern University. His Ph.D. work focuses on developing formal methods for robot sensing and runtime
active learning. He is also the recipient of the 2019 King-Sun Fu IEEE Transactions on Robotics Best Paper award.
\end{IEEEbiography}

\vspace{-60mm}

\begin{IEEEbiography}[{\includegraphics[width=1in,height=1.25in,clip,keepaspectratio]{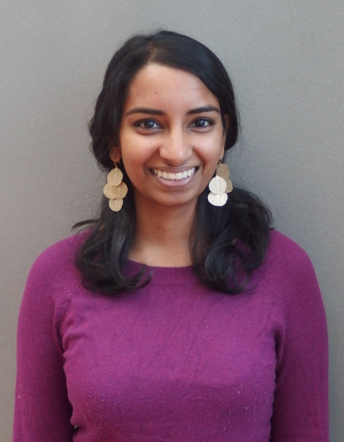}}]{Ahalya Prabhakar}
Ahalya Prabhakar received the B.S. degree in mechanical engineering from California Institute of Technology, Pasadena,
CA, USA, in 2013, and the M.S. degree in mechanical engineering from Northwestern University, Evanston, IL, USA, in
2016. She is a Ph.D. candidate at the Center for Robotics and Biosystems at Northwestern University. Her work focuses on
developing compressible representations through active exploration for complex task performance and efficient robot
learning.
\end{IEEEbiography}

\vspace{-60mm}

\begin{IEEEbiography}[{\includegraphics[width=1in,height=1.25in,clip,keepaspectratio]{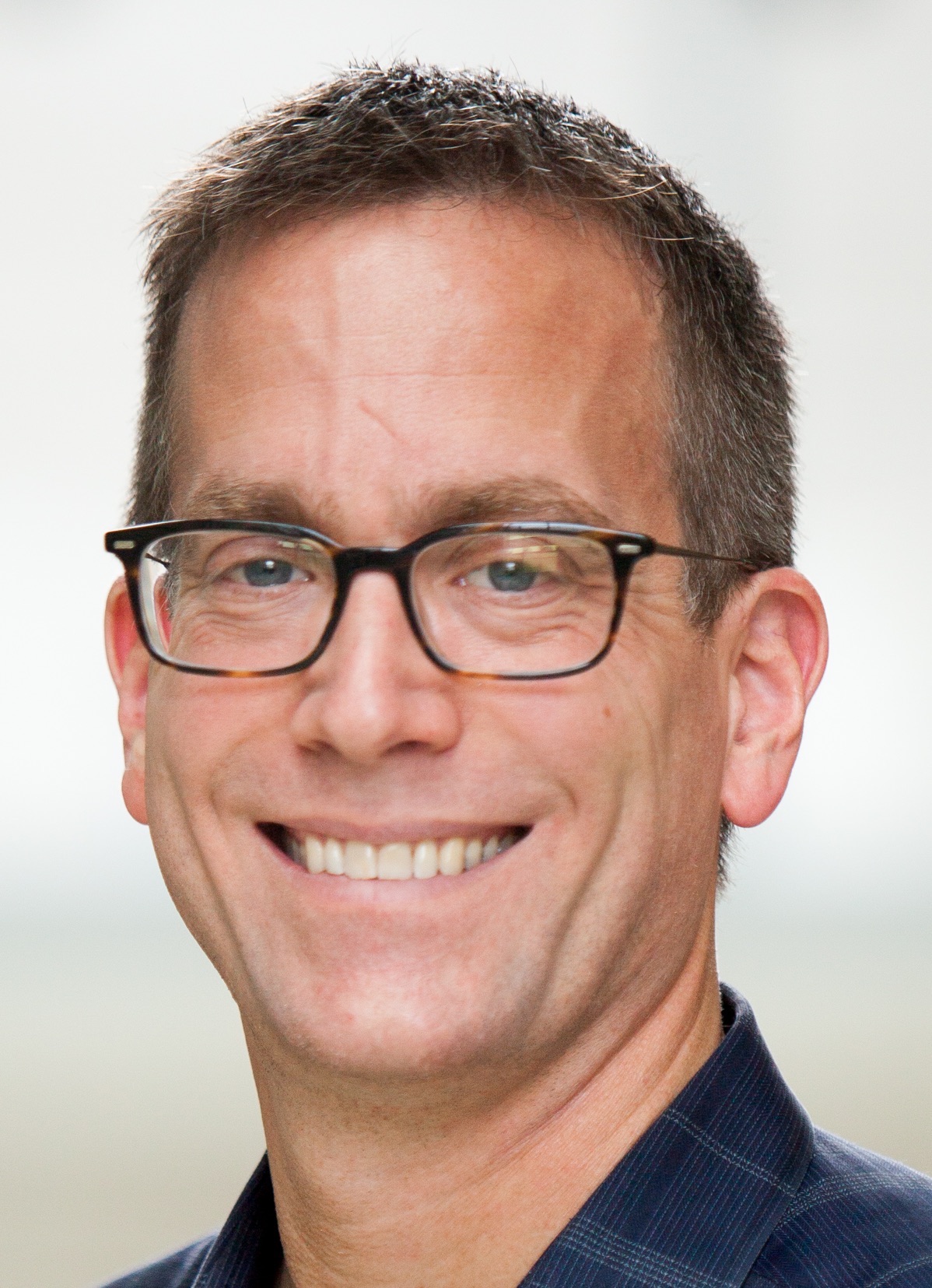}}]{Todd D. Murphey}
Todd D. Murphey received his B.S. degree in mathematics from the University of Arizona and the Ph.D. degree in Control
and Dynamical Systems from the California Institute of Technology. He is a Professor of Mechanical Engineering at
Northwestern University. His laboratory is part of the Neuroscience and Robotics Laboratory, and his research interests
include robotics, control, computational methods for biomechanical systems, and computational neuroscience. Honors
include the National Science Foundation CAREER award in 2006, membership in the 2014-2015 DARPA/IDA Defense Science
Study Group, and Northwestern’s Professorship of Teaching Excellence. He was a Senior Editor of the IEEE Transactions on
Robotics.
\end{IEEEbiography}
\end{document}